\documentclass[runningheads,a4paper]{llncs}
%

\usepackage{tikz}
\usepackage{graphicx}  
\usepackage{url}      
\usepackage{times}
\usepackage[square,sort,comma,numbers]{natbib}
\usepackage{amsfonts}
\usepackage{amsmath}
\usepackage{amsfonts}
\usepackage{verbatim}
\usepackage{fixltx2e}
\usepackage{fouridx}

\usepackage{docmute}
\usepackage{amsmath,amsfonts,amssymb,mathtools,xspace}
\usepackage{tikz}
\usetikzlibrary{decorations.pathreplacing}
\usetikzlibrary{decorations.markings}
\usetikzlibrary{calc}
\usepackage{etoolbox}
\usepackage{datetime}


\pretolerance=2000



\newenvironment{pic}[1][]
{\begin{aligned}\begin{tikzpicture}[#1]}
{\end{tikzpicture}\end{aligned}}
\newcommand{\edges}[1][]%
{
}

\makeatletter
\def\calign@preamble{%
   &\hfil\strut@
    \setboxz@h{\@lign$\m@th\displaystyle{##}$}%
    \ifmeasuring@\savefieldlength@\fi
    \set@field
    \hfil
    \tabskip\alignsep@
}
\let\cmeasure@\measure@
\patchcmd\cmeasure@{\divide\@tempcntb\tw@}{}{}{}
\patchcmd\cmeasure@{\divide\@tempcntb\tw@}{}{}{}
\patchcmd\cmeasure@{\ifodd\maxfields@
  \global\advance\maxfields@\@ne
  \fi}{}{}{}    
  
\makeatother

\makeatletter

\makeatother
\setcounter{secnumdepth}{2}


\newcommand\tinymatrix[1]
{\left( \hspace{-2pt} \renewcommand\thickspace{\kern2pt} \scriptstyle\begin{smallmatrix} #1 \end{smallmatrix} \hspace{-2pt} \right)}

\newcommand\ignore[1]{}


\pgfdeclarelayer{foreground}
\pgfdeclarelayer{background}
\pgfsetlayers{main,foreground,background}

\makeatletter
\pgfkeys{%
  /tikz/on layer/.code={
    \pgfonlayer{#1}\begingroup
    \aftergroup\endpgfonlayer
    \aftergroup\endgroup
  },
  /tikz/node on layer/.code={
    \gdef\node@@on@layer{%
      \setbox\tikz@tempbox=\hbox\bgroup\pgfonlayer{#1}\unhbox\tikz@tempbox\endpgfonlayer\egroup}
    \aftergroup\node@on@layer
  },
  /tikz/end node on layer/.code={
    \endpgfonlayer\endgroup\endgroup
  }
}
\def\node@on@layer{\aftergroup\node@@on@layer}
\makeatother\def\thickness{0.7pt}
\tikzstyle{oldmorphism}=[minimum width=30pt, minimum height=16pt, draw, font=\small, inner sep=0pt, fill=white, line width=\thickness]
\tikzstyle{cross}=[preaction={draw=white, -, line width=10pt}]
\tikzstyle{braid}=[double=black, line width=3*\thickness, double distance=\thickness, white]
\tikzstyle{string}=[line width=\thickness]
\tikzstyle{scalar}=[circle, inner sep=0pt, minimum width=15pt, draw, line width=\thickness]
\tikzstyle{dot}=[circle, draw=black, fill=black!25, inner sep=.4ex, line width=\thickness, node on layer=foreground]
\tikzstyle{blackdot}=[circle, draw=black, fill=black!50, inner sep=.4ex, line width=\thickness, node on layer=foreground]
\tikzstyle{whitedot}=[circle, draw=black, fill=white, inner sep=.4ex, line width=\thickness, node on layer=foreground]
\tikzstyle{mixedmorphism}=[morphism, minimum width=30pt, minimum height=16pt, draw, font=\small, inner sep=0pt, fill=white, line width=\thickness,rounded corners=1ex]
\tikzstyle{thick}=[line width=\thickness]
\tikzstyle{tiny}=[font=\tiny]

\tikzset{arrow/.style={decoration={
    markings,
    mark=at position #1 with \arrow{thickarrow}},
    postaction=decorate}
}
\tikzset{reverse arrow/.style={decoration={
    markings,
    mark=at position #1 with \arrow{reversethickarrow}},
    postaction=decorate}
}

\newif\ifblack\pgfkeys{/tikz/black/.is if=black}
\newif\ifwedge\pgfkeys{/tikz/wedge/.is if=wedge}
\newif\ifvflip\pgfkeys{/tikz/vflip/.is if=vflip}
\newif\ifhflip\pgfkeys{/tikz/hflip/.is if=hflip}
\newif\ifhvflip\pgfkeys{/tikz/hvflip/.is if=hvflip}
\newif\ifconnectnw\pgfkeys{/tikz/connect nw/.is if=connectnw}
\newif\ifconnectne\pgfkeys{/tikz/connect ne/.is if=connectne}
\newif\ifconnectsw\pgfkeys{/tikz/connect sw/.is if=connectsw}
\newif\ifconnectse\pgfkeys{/tikz/connect se/.is if=connectse}
\newif\ifconnectn\pgfkeys{/tikz/connect n/.is if=connectn}
\newif\ifconnects\pgfkeys{/tikz/connect s/.is if=connects}
\newif\ifconnectnwf\pgfkeys{/tikz/connect nw >/.is if=connectnwf}
\newif\ifconnectnef\pgfkeys{/tikz/connect ne >/.is if=connectnef}
\newif\ifconnectswf\pgfkeys{/tikz/connect sw >/.is if=connectswf}
\newif\ifconnectsef\pgfkeys{/tikz/connect se >/.is if=connectsef}
\newif\ifconnectnf\pgfkeys{/tikz/connect n >/.is if=connectnf}
\newif\ifconnectsf\pgfkeys{/tikz/connect s >/.is if=connectsf}
\newif\ifconnectnwr\pgfkeys{/tikz/connect nw </.is if=connectnwr}
\newif\ifconnectner\pgfkeys{/tikz/connect ne </.is if=connectner}
\newif\ifconnectswr\pgfkeys{/tikz/connect sw </.is if=connectswr}
\newif\ifconnectser\pgfkeys{/tikz/connect se </.is if=connectser}
\newif\ifconnectnr\pgfkeys{/tikz/connect n </.is if=connectnr}
\newif\ifconnectsr\pgfkeys{/tikz/connect s </.is if=connectsr}
\tikzset{keylengthnw/.initial=\connectheight}
\tikzset{keylengthn/.initial =\connectheight}
\tikzset{keylengthne/.initial=\connectheight}
\tikzset{keylengthsw/.initial=\connectheight}
\tikzset{keylengths/.initial =\connectheight}
\tikzset{keylengthse/.initial=\connectheight}
\tikzset{connect nw length/.style={connect nw=true, keylengthnw={#1}}}
\tikzset{connect n length/.style ={connect n =true, keylengthn ={#1}}}
\tikzset{connect ne length/.style={connect ne=true, keylengthne={#1}}}
\tikzset{connect sw length/.style={connect sw=true, keylengthsw={#1}}}
\tikzset{connect s length/.style ={connect s =true, keylengths ={#1}}}
\tikzset{connect se length/.style={connect se=true, keylengthse={#1}}}
\tikzset{connect nw < length/.style={connect nw <=true, keylengthnw={#1}}}
\tikzset{connect n < length/.style ={connect n <=true,  keylengthn ={#1}}}
\tikzset{connect ne < length/.style={connect ne <=true, keylengthne={#1}}}
\tikzset{connect sw < length/.style={connect sw <=true, keylengthnw={#1}}}
\tikzset{connect s < length/.style ={connect s <=true,  keylengths ={#1}}}
\tikzset{connect se < length/.style={connect se <=true, keylengthse={#1}}}
\tikzset{connect nw > length/.style={connect nw >=true, keylengthnw={#1}}}
\tikzset{connect n > length/.style ={connect n >=true,  keylengthn ={#1}}}
\tikzset{connect ne > length/.style={connect ne >=true, keylengthne={#1}}}
\tikzset{connect sw > length/.style={connect sw >=true, keylengthsw={#1}}}
\tikzset{connect s > length/.style ={connect s >=true,  keylengths ={#1}}}
\tikzset{connect se > length/.style={connect se >=true, keylengthse={#1}}}

\newlength\morphismheight
\setlength\morphismheight{0.6cm}
\newlength\minimummorphismwidth
\setlength\minimummorphismwidth{0.6cm}
\newlength\stateheight
\setlength\stateheight{0.6cm}
\newlength\minimumstatewidth
\setlength\minimumstatewidth{0.89cm}
\newlength\connectheight
\setlength\connectheight{0.5cm}
\tikzset{width/.initial=\minimummorphismwidth}

\makeatletter
\pgfarrowsdeclare{thickarrow}{thickarrow}
{
  \pgfutil@tempdima=-0.84pt%
  \advance\pgfutil@tempdima by-1.3\pgflinewidth%
  \pgfutil@tempdimb=-1.7pt%
  \advance\pgfutil@tempdimb by.625\pgflinewidth%
  \pgfarrowsleftextend{+\pgfutil@tempdima}
  \pgfarrowsrightextend{+\pgfutil@tempdimb}
}
{
  \pgfmathparse{\pgfgetarrowoptions{thickarrow}}%
  \pgfsetlinewidth{1.25 pt}
  \pgfutil@tempdima=0.28pt%
  \advance\pgfutil@tempdima by.3\pgflinewidth%
  \pgfsetlinewidth{0.8\pgflinewidth}
  \pgfsetdash{}{+0pt}
  \pgfsetroundcap
  \pgfsetroundjoin
  \pgfpathmoveto{\pgfqpoint{-3\pgfutil@tempdima}{4\pgfutil@tempdima}}
  \pgfpathcurveto
  {\pgfqpoint{-2.75\pgfutil@tempdima}{2.5\pgfutil@tempdima}}
  {\pgfqpoint{0pt}{0.25\pgfutil@tempdima}}
  {\pgfqpoint{0.75\pgfutil@tempdima}{0pt}}
  \pgfpathcurveto
  {\pgfqpoint{0pt}{-0.25\pgfutil@tempdima}}
  {\pgfqpoint{-2.75\pgfutil@tempdima}{-2.5\pgfutil@tempdima}}
  {\pgfqpoint{-3\pgfutil@tempdima}{-4\pgfutil@tempdima}}
  \pgfusepathqstroke
}
\pgfarrowsdeclare{reversethickarrow}{reversethickarrow}
{
  \pgfutil@tempdima=-0.84pt%
  \advance\pgfutil@tempdima by-1.3\pgflinewidth%
  \pgfutil@tempdimb=0.2pt%
  \advance\pgfutil@tempdimb by.625\pgflinewidth%
  \pgfarrowsleftextend{+\pgfutil@tempdima}
  \pgfarrowsrightextend{+\pgfutil@tempdimb}
}
{
  \pgftransformxscale{-1}
  \pgfmathparse{\pgfgetarrowoptions{thickarrow}}%
  \ifpgfmathunitsdeclared%
    \pgfmathparse{\pgfmathresult pt}%
  \else%
    \pgfmathparse{\pgfmathresult*\pgflinewidth}%
  \fi%
  \let\thickness=\pgfmathresult
  \pgfsetlinewidth{1.25 pt}
  \pgfutil@tempdima=0.28pt%
  \advance\pgfutil@tempdima by.3\pgflinewidth%
  \pgfsetlinewidth{0.8\pgflinewidth}
  \pgfsetdash{}{+0pt}
  \pgfsetroundcap
  \pgfsetroundjoin
  \pgfpathmoveto{\pgfqpoint{-3\pgfutil@tempdima}{4\pgfutil@tempdima}}
  \pgfpathcurveto
  {\pgfqpoint{-2.75\pgfutil@tempdima}{2.5\pgfutil@tempdima}}
  {\pgfqpoint{0pt}{0.25\pgfutil@tempdima}}
  {\pgfqpoint{0.75\pgfutil@tempdima}{0pt}}
  \pgfpathcurveto
  {\pgfqpoint{0pt}{-0.25\pgfutil@tempdima}}
  {\pgfqpoint{-2.75\pgfutil@tempdima}{-2.5\pgfutil@tempdima}}
  {\pgfqpoint{-3\pgfutil@tempdima}{-4\pgfutil@tempdima}}
  \pgfusepathqstroke
}
\makeatother

\makeatletter
\pgfdeclareshape{ground}
{
    \savedanchor\centerpoint
    {
        \pgf@x=0pt
        \pgf@y=0pt
    }
    \anchor{center}{\centerpoint}
    \anchorborder{\centerpoint}
    \anchor{north}
    {
        \pgf@x=0pt
        \pgf@y=0.5*0.33*\stateheight
    }
    \anchor{south}
    {
        \pgf@x=0pt
        \pgf@y=0pt
    }
    \saveddimen\overallwidth
    {
        \pgfkeysgetvalue{/pgf/minimum width}{\minwidth}
        \pgf@x=\minimumstatewidth
        \ifdim\pgf@x<\minwidth
            \pgf@x=\minwidth
        \fi
    }
    \backgroundpath
    {
        \begin{pgfonlayer}{foreground}
        \pgfsetstrokecolor{black}
        \pgfsetlinewidth{1.25pt}
        \ifhflip
            \pgftransformyscale{-1}
        \fi
        \pgftransformscale{0.5}
        \pgfpathmoveto{\pgfpoint{-0.5*\overallwidth}{0}}
        \pgfpathlineto{\pgfpoint{0.5*\overallwidth}{0}}
        \pgfpathmoveto{\pgfpoint{-0.33*\overallwidth}{0.33*\stateheight}}
        \pgfpathlineto{\pgfpoint{0.33*\overallwidth}{0.33*\stateheight}}
        \pgfpathmoveto{\pgfpoint{-0.16*\overallwidth}{0.66*\stateheight}}
        \pgfpathlineto{\pgfpoint{0.16*\overallwidth}{0.66*\stateheight}}
        \pgfpathmoveto{\pgfpoint{-0.02*\overallwidth}{\stateheight}}
        \pgfpathlineto{\pgfpoint{0.02*\overallwidth}{\stateheight}}
        \pgfusepath{stroke}
        \end{pgfonlayer}
    }
}
\tikzset{forward arrow style/.style={every to/.style, decoration={
    markings,
    mark=at position 0.5 with \arrow{thickarrow}},
    postaction=decorate}}
\tikzset{reverse arrow style/.style={every to/.style, decoration={
    markings,
    mark=at position 0.5 with \arrow{reversethickarrow}},
    postaction=decorate}}
\pgfdeclareshape{morphism}
{
    \savedanchor\centerpoint
    {
        \pgf@x=0pt
        \pgf@y=0pt
    }
    \anchor{center}{\centerpoint}
    \anchorborder{\centerpoint}
    \saveddimen\savedlengthnw
    {
        \pgfkeysgetvalue{/tikz/keylengthnw}{\len}
        \pgf@x=\len
    }
    \saveddimen\savedlengthn
    {
        \pgfkeysgetvalue{/tikz/keylengthn}{\len}
        \pgf@x=\len
    }
    \saveddimen\savedlengthne
    {
        \pgfkeysgetvalue{/tikz/keylengthne}{\len}
        \pgf@x=\len
    }
    \saveddimen\savedlengthsw
    {
        \pgfkeysgetvalue{/tikz/keylengthsw}{\len}
        \pgf@x=\len
    }
    \saveddimen\savedlengths
    {
        \pgfkeysgetvalue{/tikz/keylengths}{\len}
        \pgf@x=\len
    }
    \saveddimen\savedlengthse
    {
        \pgfkeysgetvalue{/tikz/keylengthse}{\len}
        \pgf@x=\len
    }
    \saveddimen\overallwidth
    {
        \pgfkeysgetvalue{/tikz/width}{\minwidth}
        \pgf@x=\wd\pgfnodeparttextbox
        \ifdim\pgf@x<\minwidth
            \pgf@x=\minwidth
        \fi
    }
    \savedanchor{\upperrightcorner}
    {
        \pgf@y=.5\ht\pgfnodeparttextbox
        \advance\pgf@y by -.5\dp\pgfnodeparttextbox
        \pgf@x=.5\wd\pgfnodeparttextbox
    }
    \anchor{north}
    {
        \pgf@x=0pt
        \pgf@y=0.5\morphismheight
    }
    \anchor{north east}
    {
        \pgf@x=\overallwidth
        \multiply \pgf@x by 2
        \divide \pgf@x by 5
        \pgf@y=0.5\morphismheight
    }
    \anchor{east}
    {
        \pgf@x=\overallwidth
        \divide \pgf@x by 2
        \advance \pgf@x by 5pt
        \pgf@y=0pt
    }
    \anchor{west}
    {
        \pgf@x=-\overallwidth
        \divide \pgf@x by 2
        \advance \pgf@x by -5pt
        \pgf@y=0pt
    }
    \anchor{north west}
    {
        \pgf@x=-\overallwidth
        \multiply \pgf@x by 2
        \divide \pgf@x by 5
        \pgf@y=0.5\morphismheight
    }
    \anchor{connect nw}
    {
        \pgf@x=-\overallwidth
        \multiply \pgf@x by 2
        \divide \pgf@x by 5
        \pgf@y=0.5\morphismheight
        \advance\pgf@y by \savedlengthnw
    }
    \anchor{connect ne}
    {
        \pgf@x=\overallwidth
        \multiply \pgf@x by 2
        \divide \pgf@x by 5
        \pgf@y=0.5\morphismheight
        \advance\pgf@y by \savedlengthne
    }
    \anchor{connect sw}
    {
        \pgf@x=-\overallwidth
        \multiply \pgf@x by 2
        \divide \pgf@x by 5
        \pgf@y=-0.5\morphismheight
        \advance\pgf@y by -\savedlengthsw
    }
    \anchor{connect se}
    {
        \pgf@x=\overallwidth
        \multiply \pgf@x by 2
        \divide \pgf@x by 5
        \pgf@y=-0.5\morphismheight
        \advance\pgf@y by -\savedlengthse
    }
    \anchor{connect n}
    {
        \pgf@x=0pt
        \pgf@y=0.5\morphismheight
        \advance\pgf@y by \savedlengthn
    }
    \anchor{connect s}
    {
        \pgf@x=0pt
        \pgf@y=-0.5\morphismheight
        \advance\pgf@y by -\savedlengths
    }
    \anchor{south east}
    {
        \pgf@x=\overallwidth
        \multiply \pgf@x by 2
        \divide \pgf@x by 5
        \pgf@y=-0.5\morphismheight
    }
    \anchor{south west}
    {
        \pgf@x=-\overallwidth
        \multiply \pgf@x by 2
        \divide \pgf@x by 5
        \pgf@y=-0.5\morphismheight
    }
    \anchor{south}
    {
        \pgf@x=0pt
        \pgf@y=-0.5\morphismheight
    }
    \anchor{text}
    {
        \upperrightcorner
        \pgf@x=-\pgf@x
        \pgf@y=-\pgf@y
    }
    \backgroundpath
    {
        \pgfsetstrokecolor{black}
        \pgfsetlinewidth{\thickness}
        \begin{scope}
                \ifhflip
                    \pgftransformyscale{-1}
                \fi
                \ifvflip
                    \pgftransformxscale{-1}
                \fi
                \ifhvflip
                    \pgftransformxscale{-1}
                    \pgftransformyscale{-1}
                \fi
                \pgfpathmoveto{\pgfpoint
                    {-0.5*\overallwidth-5pt}
                    {0.5*\morphismheight}}
                \pgfpathlineto{\pgfpoint
                    {0.5*\overallwidth+5pt}
                    {0.5*\morphismheight}}
                \ifwedge
                    \pgfpathlineto{\pgfpoint
                        {0.5*\overallwidth + 15pt}
                        {-0.5*\morphismheight}}
                \else
                    \pgfpathlineto{\pgfpoint
                        {0.5*\overallwidth + 5pt}
                        {-0.5*\morphismheight}}
                \fi
                \pgfpathlineto{\pgfpoint
                    {-0.5*\overallwidth-5pt}
                    {-0.5*\morphismheight}}
                \pgfpathclose
                \pgfusepath{stroke}
        \end{scope}
        \ifconnectnw
            \pgfpathmoveto{\pgfpoint
                {-0.4*\overallwidth}
                {0.5*\morphismheight}}
            \pgfpathlineto{\pgfpoint
                {-0.4*\overallwidth}
                {0.5*\morphismheight+\savedlengthnw}}
            \pgfusepath{stroke}
        \fi
        \ifconnectne
            \pgfpathmoveto{\pgfpoint
                {0.4*\overallwidth}
                {0.5*\morphismheight}}
            \pgfpathlineto{\pgfpoint
                {0.4*\overallwidth}
                {0.5*\morphismheight+\savedlengthne}}
            \pgfusepath{stroke}
        \fi
        \ifconnectsw
            \pgfpathmoveto{\pgfpoint
                {-0.4*\overallwidth}
                {-0.5*\morphismheight}}
            \pgfpathlineto{\pgfpoint
                {-0.4*\overallwidth}
                {-0.5*\morphismheight-\savedlengthsw}}
            \pgfusepath{stroke}
        \fi
        \ifconnectse
            \pgfpathmoveto{\pgfpoint
                {0.4*\overallwidth}
                {-0.5*\morphismheight}}
            \pgfpathlineto{\pgfpoint
                {0.4*\overallwidth}
                {-0.5*\morphismheight-\savedlengthse}}
            \pgfusepath{stroke}
        \fi
        \ifconnectn
            \pgfpathmoveto{\pgfpoint
                {0pt}
                {0.5*\morphismheight}}
            \pgfpathlineto{\pgfpoint
                {0pt}
                {0.5*\morphismheight+\savedlengthn}}
            \pgfusepath{stroke}
        \fi
        \ifconnects
            \pgfpathmoveto{\pgfpoint
                {0pt}
                {-0.5*\morphismheight}}
            \pgfpathlineto{\pgfpoint
                {0pt}
                {-0.5*\morphismheight-\savedlengths}}
            \pgfusepath{stroke}
        \fi
        \ifconnectnwf
            \draw [forward arrow style] (-0.4*\overallwidth,0.5*\morphismheight)
                to (-0.4*\overallwidth,0.5*\morphismheight+\savedlengthnw);
        \fi
        \ifconnectnef
            \draw [forward arrow style] (0.4*\overallwidth,0.5*\morphismheight)
                to (0.4*\overallwidth,0.5*\morphismheight+\savedlengthne);
        \fi
        \ifconnectswf
            \draw [forward arrow style] (-0.4*\overallwidth,-0.5*\morphismheight-\savedlengthsw)
                to (-0.4*\overallwidth,-0.5*\morphismheight);
        \fi
        \ifconnectsef
            \draw [forward arrow style] (0.4*\overallwidth,-0.5*\morphismheight-\savedlengthse)
                to (0.4*\overallwidth,-0.5*\morphismheight);
        \fi
        \ifconnectnf
            \draw [forward arrow style] (0,0.5*\morphismheight)
                to (0,0.5*\morphismheight+\savedlengthn);
        \fi
        \ifconnectsf
            \draw [forward arrow style] (0,-0.5*\morphismheight-\savedlengths)
                to (0,-0.5*\morphismheight);
        \fi
        \ifconnectnwr
            \draw [reverse arrow style] (-0.4*\overallwidth,0.5*\morphismheight)
                to (-0.4*\overallwidth,0.5*\morphismheight+\savedlengthnw);
        \fi
        \ifconnectner
            \draw [reverse arrow style] (0.4*\overallwidth,0.5*\morphismheight)
                to (0.4*\overallwidth,0.5*\morphismheight+\savedlengthne);
        \fi
        \ifconnectswr
            \draw [reverse arrow style] (-0.4*\overallwidth,-0.5*\morphismheight-\savedlengthsw)
                to (-0.4*\overallwidth,-0.5*\morphismheight);
        \fi
        \ifconnectser
            \draw [reverse arrow style] (0.4*\overallwidth,-0.5*\morphismheight-\savedlengthse)
                to (0.4*\overallwidth,-0.5*\morphismheight);
        \fi
        \ifconnectnr
            \draw [reverse arrow style] (0,0.5*\morphismheight)
                to (0,0.5*\morphismheight+\savedlengthn);
        \fi
        \ifconnectsr
            \draw [reverse arrow style] (0,-0.5*\morphismheight-\savedlengths)
                to (0,-0.5*\morphismheight);
        \fi
    }
}
\pgfdeclareshape{swish right}
{
    \savedanchor\centerpoint
    {
        \pgf@x=0pt
        \pgf@y=0pt
    }
    \anchor{center}{\centerpoint}
    \anchorborder{\centerpoint}
    \anchor{north}
    {
        \pgf@x=\minimummorphismwidth
        \divide\pgf@x by 5
        \pgf@y=\morphismheight
        \divide\pgf@y by 2
        \advance\pgf@y by \connectheight
    }
    \anchor{south}
    {
        \pgf@x=-\minimummorphismwidth
        \divide\pgf@x by 5
        \pgf@y=-\morphismheight
        \divide\pgf@y by 2
        \advance\pgf@y by -\connectheight
    }
    \backgroundpath
    {
        \pgfsetstrokecolor{black}
        \pgfsetlinewidth{\thickness}
        \pgfpathmoveto{\pgfpoint
            {-0.2*\minimummorphismwidth}
            {-0.5*\morphismheight-\connectheight}}
        \pgfpathcurveto
            {\pgfpoint{-0.2*\minimummorphismwidth}{0pt}}
            {\pgfpoint{0.2*\minimummorphismwidth}{0pt}}
            {\pgfpoint
                {0.2*\minimummorphismwidth}
                {0.5*\morphismheight+\connectheight}}
        \pgfusepath{stroke}
    }
}
\pgfdeclareshape{swish left}
{
    \savedanchor\centerpoint
    {
        \pgf@x=0pt
        \pgf@y=0pt
    }
    \anchor{center}{\centerpoint}
    \anchorborder{\centerpoint}
    \anchor{north}
    {
        \pgf@x=-\minimummorphismwidth
        \divide\pgf@x by 5
        \pgf@y=\morphismheight
        \divide\pgf@y by 2
        \advance\pgf@y by \connectheight
    }
    \anchor{south}
    {
        \pgf@x=\minimummorphismwidth
        \divide\pgf@x by 5
        \pgf@y=-\morphismheight
        \divide\pgf@y by 2
        \advance\pgf@y by -\connectheight
    }
    \backgroundpath
    {
        \pgfsetstrokecolor{black}
        \pgfsetlinewidth{\thickness}
        \pgfpathmoveto{\pgfpoint
            {0.2*\minimummorphismwidth}
            {-0.5*\morphismheight-\connectheight}}
        \pgfpathcurveto
            {\pgfpoint{0.2*\minimummorphismwidth}{0pt}}
            {\pgfpoint{-0.2*\minimummorphismwidth}{0pt}}
            {\pgfpoint
                {-0.2*\minimummorphismwidth}
                {0.5*\morphismheight+\connectheight}}
        \pgfusepath{stroke}
    }
}
\pgfdeclareshape{state}
{
    \savedanchor\centerpoint
    {
        \pgf@x=0pt
        \pgf@y=0pt
    }
    \anchor{center}{\centerpoint}
    \anchorborder{\centerpoint}
    \saveddimen\overallwidth
    {
        \pgf@x=3\wd\pgfnodeparttextbox
        \ifdim\pgf@x<\minimumstatewidth
            \pgf@x=\minimumstatewidth
        \fi
    }
    \savedanchor{\upperrightcorner}
    {
        \pgf@x=.5\wd\pgfnodeparttextbox
        \pgf@y=.5\ht\pgfnodeparttextbox
        \advance\pgf@y by -.5\dp\pgfnodeparttextbox
    }
    \anchor{A}
    {
        \pgf@x=-\overallwidth
        \divide\pgf@x by 4
        \pgf@y=0pt
    }
    \anchor{B}
    {
        \pgf@x=\overallwidth
        \divide\pgf@x by 4
        \pgf@y=0pt
    }
    \anchor{text}
    {
        \upperrightcorner
        \pgf@x=-\pgf@x
        \ifhflip
            \pgf@y=-\pgf@y
            \advance\pgf@y by 0.4\stateheight
        \else
            \pgf@y=-\pgf@y
            \advance\pgf@y by -0.4\stateheight
        \fi
    }
    \backgroundpath
    {
        \pgfsetstrokecolor{black}
        \pgfsetlinewidth{\thickness}
        \pgfpathmoveto{\pgfpoint{-0.5*\overallwidth}{0}}
        \pgfpathlineto{\pgfpoint{0.5*\overallwidth}{0}}
        \ifhflip
            \pgfpathlineto{\pgfpoint{0}{\stateheight}}
        \else
            \pgfpathlineto{\pgfpoint{0}{-\stateheight}}
        \fi
        \pgfpathclose
        \ifblack
            \pgfsetfillcolor{black!50}
            \pgfusepath{fill,stroke}
        \else
            \pgfusepath{stroke}
        \fi
    }
}
\makeatother


\newcommand{\tinyhandle}[1][dot]{\raisebox{-2pt}{\ensuremath{\hspace{-3pt}\begin{pic}[scale=0.4,string]
        \node (0) at (0,0) {};
        \node[dot, inner sep=1.0pt] (1) at (0,0.3) {};
        \node[dot, inner sep=1.0pt] (2) at (0,0.7) {};
        \node (3) at (0,1) {};
        \draw (0.center) to (1.center);
        \draw (2.center) to (3.center);
        \draw[in=180, out=180, looseness=2] (1.center) to (2.center);
        \draw[in=0, out=0, looseness=2] (1.center) to (2.center);
\end{pic}\hspace{-1pt}}}}



\newcommand\cat[1]{\ensuremath{\mathbf{#1}}}

\newcommand\ket[1]{\ensuremath{| #1 \rangle}}
\newcommand\bra[1]{\ensuremath{\langle #1 |}}

\usepackage{color}

\pgfdeclarelayer{edgelayer}
\pgfdeclarelayer{nodelayer}
\pgfsetlayers{edgelayer,nodelayer,main}

\tikzstyle{none}=[inner sep=0pt]
\tikzstyle{box}=[rectangle,fill=White,draw=Black]
\tikzstyle{ket}=[state,hflip,fill=White,draw=Black]
\tikzstyle{bra}=[state,fill=White,draw=Black]

\newcommand{\ra}{\rightarrow}

\newcommand{\R}{\mathbb{R}}

\newcommand{\tns}{\otimes}

\newcommand{\bo}{\mathbf}

\renewcommand{\ts}{\otimes}
\newcommand{\tr}{\text{tr}}

\newcommand{\braket}[2]{\langle #1 \vert #2 \rangle}
\newcommand{\ketbra}[2]{\vert #1 \rangle  \langle #2 \vert}

\renewcommand{\vec}[1]{\overrightarrow{#1}}
\newcommand{\denst}[1]{\widehat{ #1}}
\newcommand{\prj}[1]{\vert \overrightarrow{#1} \rangle  \langle \overrightarrow{#1} \vert }
\newcommand{\prjdif}[2]{\vert \overrightarrow{#1} \rangle  \langle \overrightarrow{#2} \vert }
\newcommand{\ov}[1]{\overrightarrow{#1}}

\begin{document}

\mainmatter  

\title{Distributional Sentence Entailment Using Density Matrices}

\titlerunning{Distributional Sentence Entailment Using Density Matrices}

%
%
\author{Esma Balkir\inst{1}  \and Mehrnoosh Sadrzadeh\inst{1} \and Bob Coecke\inst{2}}
\authorrunning{E. Balk{\i}r, M. Sadrzadeh \& B. Coecke}

\institute{Queen Mary University of London \and University of Oxford}

%
%

\maketitle

\begin{abstract}
Categorical compositional distributional model of  \cite{coecke2010} suggests a way to combine grammatical composition of the formal, type logical models with the corpus based, empirical word representations of distributional semantics. This paper contributes to the project by expanding the model to also capture entailment relations. This is achieved by extending the representations of words from points in meaning space to density operators, which are probability distributions on the subspaces of the space. A symmetric measure of similarity and an asymmetric measure of entailment is defined, where lexical entailment is measured using von Neumann entropy, the quantum variant of Kullback-Leibler divergence. Lexical entailment, combined with the composition map on word representations, provides a method to obtain entailment relations on the level of sentences. Truth theoretic and corpus-based examples are provided. 
\end{abstract}

\section{Introduction}
The term \textit{distributional semantics} is almost synonymous with the term \textit{vector 
space models of meaning}. This is because vector spaces are  natural
candidates for modelling the \emph{distributional hypothesis} and contextual similarity between words \citep{firth1957}.  In a nutshell, this hypothesis says that words that often occur in the same contexts have similar meanings. So for instance, `ale' and `lager' are similar since they both often occur in the context of `beer', `pub', and `pint'. 
The obvious practicality of these models, however, does not guarantee that they possess the 
expressive power needed to model all aspects of meaning. Current distributional models mostly fall
short of successfully modelling subsumbtion and entailment \citep{lenci2012}. There are a number of models that use distributional similarity to enhance textual entailment  \citep{glickman2005, beltagy2013}.  However, most of the work from the distributional semantics community 
has been focused on developing more sophisticated metrics on vector representations \citep{lin1998, kotlerman2010, weeds2004}.

In this paper we suggest the use of density matrices instead of vector spaces as the basic distributional
representations for the meanings of words. Density matrices are widely used in quantum mechanics,
and are a generalization of vectors. There are several advantages to using density matrices to model meaning. Firstly,  density matrices have the expressive power to represent all the information vectors can represent: 
they are a suitable implementation of the distributional hypothesis.
They come equipped with a measure of information content, and so provide a natural 
way of implementing asymmetric relations between words such as hyponymy-hypernymy relations. Futhermore, they form a compact closed category. This allows the previous work of \cite{coecke2010} on obtaining representations for meanings of sentences from the meaning of words to be applicable to density 
matrices. The categorical map from meanings of words to the meaning of the sentence respects the order induced by the relative entropy of density matrices. This promises, given suitable representations of individual words, a method to obtain entailment relations on the level of sentences, inline with the  lexical entailment of \emph{natural logic}, e.g. see \cite{MacCartney2007}, rather than the traditional  logical entailment of Montague semantics. 

\paragraph{Related Work.}This work builds upon and relates to literature on compositional distributional  models, distributional lexical entailment, and the use of density matrices in computational linguistics and information retrieval. 

There has been a recent interest in methods of composition within the distributional semantics framework. There 
are a number of composition methods in literature. See \cite{grefenstette2013} for a survey of compositional 
distributional models and a discussion of their strengths and weaknesses. This work extends the work presented in \cite{coecke2010}, 
a compositional model based on category theory. Their  model 
was shown to outperform the competing compositional models by \cite{grefenstette2011}.

Research on distributional entailment has mostly been focused on lexical entailment. One notable exception is \cite{baroni2012}, who use the distributional data on adjective-noun pairs to train a classifier, which is then utilized to detect novel noun pairs that have the same relation. 
There are a number of non-symmetric lexical entailment measures \cite{weeds2004, clarke2009,
kotlerman2010, lenci2012} which all rely on some variation of
the \textit{Distributional Inclusion Hypothesis}: ``If $u$ is semantically narrower than $v$, 
then a significant number of salient distributional features of $u$ are also included in the feature vector of $v$ '' \cite{kotlerman2010}. In their experiments, \cite{geffet2005} show that while if a word $v$ entails another word $w$ then the characteristic features of $v$ is a subset of the ones for $w$, it's not necessarily the case that the inclusion of the characteristic features $v$ in $w$ indicate that $v$ entails $w$. One of their suggestions for increasing the prediction power of their method is to
 include more than one word in the features.

\cite{santus2014} use a measure based on entropy to detect hyponym-hypernym relationships in given pairs.
The measure they suggest rely on the hypothesis that hypernyms are semantically more general than 
hyponyms, and therefore tend to occur in less informative contexts.
 \cite{herbelot2013} rely on a very similar idea, and use KL-divergence between the target
 word and the basis words to quantify the semantic content of the target word. They conclude that this 
 method performs equally well in detecting hyponym-hypernym pairs as their baseline prediction method that only considers the overall frequency 
 of the word in corpus. They reject the hypothesis that
more general words occur in less informative contexts. Their method differs from ours 
in that they use relative entropy to quantify the overall information content of a word, and not to compare 
two target words to each other. 

\cite{piedeleu2015} extend the compositional model of \cite{coecke2010} to include density matrices as we do, but use it for modeling homonymy and polysemy. Their approach is complementary to ours, and in fact, they show that it is possible to merge the two constructions. \cite{bruza2005} use density matrices to model
context effects in a conceptual space. In their quantum mechanics inspired model,
words are represented by mixed states and each eigenstate represents a sense of the word. 
Context effects are then modelled as quantum collapse. \cite{blacoe2013} use density matrices to encode dependency neighbourhoods, with the aim of modelling context effects in similarity tasks.  \cite{van2004} uses density matrices to sketch out a theory of information retrieval,  
and 
connects the logic of the space and of density matrices via an order relation that makes 
the set of projectors in a Hilbert space into a complete lattice. He uses this order to define  
an entailment relation. \cite{sordoni2013} show that using density matrices to represent documents provides significant improvement on realistic IR tasks. 

This paper is based on the MSc Thesis of the first author \citep{balkir2014}.

\section{Background}

\begin{definition} 
A monoidal category is \textbf{compact closed} if for any object $A$, there are  left and right
dual objects, i.e. objects $A^r$ and $A^l$, and morphisms $
\eta^l : I \rightarrow A \otimes A^l$, 
$\eta^r: I \rightarrow A^r \otimes A$, 
 $\epsilon^l : A^l \otimes A  \rightarrow I$ and
$\epsilon^r: A \otimes A^r \rightarrow I$ that satisfy:
\begin{center}
$\begin{array}{cc}
(1_A \otimes \epsilon^l ) \circ ( \eta^l \otimes 1_A) = 1_A &
 (\epsilon^r \otimes 1_A ) \circ (1_A \otimes \eta^r ) = 1_A \\
( \epsilon^l \otimes 1_{A^l} ) \circ (1_{A^l} \otimes \eta^l) = 1_{A^l} &
(1_{A^r }\otimes \epsilon^r ) \circ ( \eta^r \otimes 1_{A^r}) = 1_{A^r} 
\end{array}$
\end{center}
\end{definition}

Compact closed categories are used to represent \textit{correlations}, and in categorical quantum mechanics
they model maximally entangled states. \citep{abramsky2004} The $\eta$ and $\epsilon$ maps are useful 
in modeling the interactions of the different parts of a system. To see how this relates to natural language, consider a simple 
sentence with an object, a subject and a transitive verb. The meaning of the entire sentence is not 
simply an accumulation of the meanings of its  individual words, but depends on how the transitive verb relates 
the subject and the object. The $\eta$ and $\epsilon$ maps provide the mathematical 
formalism to specify such interactions. The distinct left and right duals ensure that compact
closed categories can take word order into account.

There is a graphical calculus used to reason about  monoidal categories \citep{coecke2010B}.
In the graphical language, objects are wires, and morphisms are boxes with incoming 
and outgoing wires of types corresponding to the input and output types of the morphism.
The identity object is depicted as empty space, so a state $\psi: I \ra A$ is depicted
as a box with no input wire and an output wire with type $A$. The duals of states are called \textit{effects}, and they are of type $A \ra I$. Let $f: A \ra B$, $g: B \ra C$ and $h : C \ra D$, and $1_A: A \ra A$ the identity function on $A$. $1_A$, $f$,  $f \otimes h$, $g \circ f$ are depicted as follows:
\begin{center}
\vspace{-4mm}
\begin{tikzpicture}
 \begin{pgfonlayer}{nodelayer}
 \node [style=none] (0) at (0,1 ) {};
		\node [style=none] (1) at (0,-1 ) {};
		\node (2) at (1, 0) {};
 \end{pgfonlayer}
\begin{pgfonlayer}{edgelayer}
	\draw [arrow=.5] (0.center) to node[left]{A} (1.center);
 \end{pgfonlayer}
 \end{tikzpicture}
\begin{tikzpicture}
 \begin{pgfonlayer}{nodelayer}
 	\node [morphism] (3) at (0, 0) {f};
		\node [style=none] (5) at (0, 1) {};
		\node [style=none] (6) at (0, -1) {};
		\node (7) at (1, 0) {};
 \end{pgfonlayer}
\begin{pgfonlayer}{edgelayer}
	\draw [arrow=.5] (5.center) to node[left]{A} (3.north);
		\draw [arrow=.5] [in=90, out=-90] (3.south) to node[left]{B} (6.center);
 \end{pgfonlayer}
 \end{tikzpicture} 
 \begin{tikzpicture}
 \begin{pgfonlayer}{nodelayer}
 \node (0) at (0,0){};
 \node (1) at (1,0){};
 \node [morphism] (2) at (0,-1) {f};
 \node [morphism] (3) at (1,-1) {h};
 \node (4) at (0,-2){};
 \node (5) at (1,-2){};
 \node (6) at (2,0){};
 \end{pgfonlayer}
\begin{pgfonlayer}{edgelayer}
	\draw [arrow=.5] (0.center) to node[left]{A} (2.north);
	\draw [arrow=.5] (2.south) to node[left]{B} (4.center);
	\draw [arrow=.5] (1.center) to node[left]{C} (3.north);
	\draw [arrow=.5] (3.south) to node[left]{D} (5.center);
 \end{pgfonlayer}
 \end{tikzpicture}
\begin{tikzpicture}
 \begin{pgfonlayer}{nodelayer}
 \node (0) at (0,-.5){};
 \node [morphism] (1) at (0,-1){f};
 \node [morphism] (2) at (0,-2){g};
 \node (3) at (0,-2.5){};
 \node (4) (1,0){};
 \end{pgfonlayer}
\begin{pgfonlayer}{edgelayer}
\draw [arrow=.5] (0.north) to node[left]{A} (1.north);
\draw [arrow=.5] (1.south) to node[left]{B} (2.north);
\draw [arrow=.5] (2.south) to node[left]{C} (3.south);
 \end{pgfonlayer}
 \end{tikzpicture}
 
 \noindent
 The state $\psi: I \ra A$, the effect $\pi: A \ra I$, and the scalar $\psi \circ \pi$ are depicted as follows:
 
\begin{tikzpicture}
	\begin{pgfonlayer}{nodelayer}
		\node[state,hflip] (1) at (-5, 1) {$\psi$};
		\node [style=none] (2) at (-5, 0.25) {};
		\end{pgfonlayer}
	\begin{pgfonlayer}{edgelayer}
	\draw [arrow=.5] (1) to node[left]{A} (2.center);
	\end{pgfonlayer}
	 \end{tikzpicture}
	 \hspace{5mm}
	 \begin{tikzpicture}
	\begin{pgfonlayer}{nodelayer}
		\node [style=none] (4) at (-2, 1) {};
		\node[state](5) at (-2, 0.25) {$\pi$};
		\end{pgfonlayer}
	\begin{pgfonlayer}{edgelayer}
	\draw [arrow=.5] (4.center) to node[left]{A} (5);
	\end{pgfonlayer}
 \end{tikzpicture}
 \hspace{5mm}
	\begin{tikzpicture}
	\begin{pgfonlayer}{nodelayer}
		\node[state] (6) at (1, 0.25) {$\pi$};
		\node[state,hflip](7) at (1, 1.25) {$\psi$};
	\end{pgfonlayer}
	\begin{pgfonlayer}{edgelayer}
	\draw [arrow=.5] (7) to node[right]{} (6);
	\end{pgfonlayer}
 \end{tikzpicture}
 \end{center}

The maps $\eta^l, \eta^r, \epsilon^l$ and $\epsilon^r$ take the following forms 
in the graphical calculus:

\begin{center}
\begin{tikzpicture}
	\begin{pgfonlayer}{nodelayer}
		\node [style=none, label=below: {$A^r$} ] (0) at (-6, 0){} ;
		\node [style=none, label=below: {$A$} ] (1) at (-5, 0) {};
		\node [style=none, label=below: {$A$} ] (2) at (-3, 0) {};
		\node [style=none, label=below: {$A^l$} ] (3) at (-2, 0){} ;
		\node [style=none, label=above: {$A$} ] (4) at (0, 0) {} ;
		\node [style=none, label=above: {$A^r$}] (5) at (1, 0) {} ;
		\node [style=none, label=above: {$A^l$}] (6) at (3, 0) {} ;
		\node [style=none, label=above: {$A$}] (7) at (4, 0) {} ;
		\node [style=none] (8) at (-6.75, 0) {$\eta^r$:};
		\node [style=none] (9) at (-3.75, 0) {$\eta^l$:};
		\node [style=none] (10) at (-0.75, 0) {$\epsilon^r$:};
		\node [style=none] (11) at (2.25, 0) {$\epsilon^l$:};
	\end{pgfonlayer}
	\begin{pgfonlayer}{edgelayer}
		\draw [arrow=.5] [style=none, bend left=90, looseness=2.00] (0.center) to (1.center);
		\draw [arrow=.5][style=none, bend right=90, looseness=2.00] (3.center) to (2.center);
		\draw [arrow=.5] [style=none, bend right=90, looseness=2.00] (4.center) to (5.center);
		\draw [arrow=.5] [style=none, bend left=90, looseness=2.00] (7.center) to (6.center);
	\end{pgfonlayer}
\end{tikzpicture}
\end{center}

The axioms of compact closure, referred to as the \textit{snake identities} because of the visual form they 
take in the graphical calculus, are represented as follows:

\begin{center}
\begin{tikzpicture}
	\begin{pgfonlayer}{nodelayer}
		\node [style=none, label=below: {$A$} ] (0) at (-6, 1) {};
		\node [style=none] (1) at (-5, 1) {};
		\node [style=none, label=above:{$A$}] (2) at (-4, 1) {};
		\node [style=none, ] (20) at (-3.5, 1) {=};
		\node [style=none, label=above:{$A$}] (12) at (-3, 1.5) {};
		\node [style=none,  label=below:{$A$}] (13) at (-3, 0.5) {};
		\end{pgfonlayer}
\begin{pgfonlayer}{edgelayer}
	\draw [arrow=.5] [bend right=90, looseness=2.25] (1.center) to (0.center);
		\draw[arrow=.5]  [bend left=90, looseness=2.25] (2.center) to (1.center);
	\draw [arrow=.5] (12.center) to (13.center);
 \end{pgfonlayer}
 \end{tikzpicture}
 \hspace{5mm}
 \begin{tikzpicture}
 \begin{pgfonlayer}{nodelayer}
		\node [style=none, label=above:{$A$}] (3) at (-1.25, 1) {};
		\node [style=none] (4) at (-0.25, 1) {};
		\node [style=none, label=below: {$A$}] (5) at (0.75, 1) {};
		\node [style=none] (22) at (1.25, 1) {=};
		\node [style=none, label=above:{$A$}] (16) at (1.75, 1.5) {};
		\node [style=none, label=below:{$A$}] (17) at (1.75, 0.5) {};
		\end{pgfonlayer}
\begin{pgfonlayer}{edgelayer}
		\draw [arrow=.5] [bend right=90, looseness=2.25] (3.center) to (4.center);
		\draw [arrow=.5] [bend left=90, looseness=2.25] (4.center) to (5.center);
		\draw [arrow=.5] (16.center) to (17.center);
 \end{pgfonlayer}
 \end{tikzpicture}
 
 \begin{tikzpicture}
 \begin{pgfonlayer}{nodelayer}
		\node [style=none] (6) at (-0.25, -1) {};
		\node [style=none, label=below:{$A^r$}] (7) at (-1.25, -1) {};
		\node [style=none,label=above:{$A^r$}] (8) at (0.75, -1) {};
		\node [style=none] (23) at (1.25, -1) {=};
		\node [style=none, label=above:{$A^r$}] (18) at (1.75, -0.5) {};
		\node [style=none,  label=below:{$A^r$}] (19) at (1.75, -1.5) {};
		\end{pgfonlayer}
\begin{pgfonlayer}{edgelayer}
		\draw [arrow=.5] [bend left=90, looseness=2.25] (7.center) to (6.center);
		\draw [arrow=.5] [bend right=90, looseness=2.25] (6.center) to (8.center);
		\draw [arrow=.5] (18.center) to (19.center);
 \end{pgfonlayer}
 \end{tikzpicture}
  \hspace{5mm}
 \begin{tikzpicture}
 \begin{pgfonlayer}{nodelayer}
		\node [style=none, label=above:{$A^l$}] (9) at (-6, -1) {};
		\node [style=none, label=below:{$A^l$}] (10) at (-4, -1) {};
		\node [style=none] (11) at (-5, -1) {};
		\node [style=none, label=above:{$A^l$}] (14) at (-3, -0.5) {};
		\node [style=none,  label=below:{$A^l$}] (15) at (-3, -1.5) {};
		\node [style=none] (21) at (-3.5, -1) {=};
		\end{pgfonlayer}
	\begin{pgfonlayer}{edgelayer}
		\draw [arrow=.5] [bend left=90, looseness=2.25] (11.center) to (9.center);
		\draw [arrow=.5] [bend right=90, looseness=2.25] (10.center) to (11.center);
		\draw [arrow=.5] (14.center) to (15.center);
	\end{pgfonlayer}
\end{tikzpicture}
\end{center}

More generally, the reduction rules for diagrammatic calculus allow continuous deformations.  
One such deformation we will make use of is the \textit{swing rule}:

\begin{center}
\begin{tikzpicture}
	\begin{pgfonlayer}{nodelayer}
		\node [state, hflip] (0) at (-4, 0.5) {$\psi$};
		\node [style=none] (1) at (-2.5, 0.5) {};
		\node [style=none] (2) at (-1.75, 0.5) {=};
		\node [state] (3) at (-1, 0) {$\psi$};
		\node [style=none] (4) at (-1, 1) {};
		\node [style=none] (5) at (5.25, 1) {};
		\node [style=none] (6) at (1.75, 0.5) {};
		\node [state, hflip] (7) at (3.25, 0.5) {$\psi$};
		\node [state] (8) at (5.25, 0) {$\psi$};
		\node [style=none] (9) at (4.25, 0.5) {=};
	\end{pgfonlayer}
	\begin{pgfonlayer}{edgelayer}
		\draw [arrow=.5] (3) to (4.center);
		\draw [arrow=.5, bend left=90, looseness=1.50] (7) to (6.center);
		\draw [arrow=.5](8) to (5.center);
		\draw [arrow=.5, bend right=90, looseness=1.50] (0) to (1.center);
	\end{pgfonlayer}
\end{tikzpicture}
\end{center}

\begin{definition} \citep{lambek2001}
 A \textbf{pregroup} $(P, \leq, \cdot, 1, (-)^l, (-)^r )$ is a partially ordered monoid in which each element $a$ has both a left adjoint $a^l$ and 
 a right adjoint $a^r$ such that 
 $a^l a \leq 1 \leq aa^l \text{ and } aa^r \leq 1 \leq a^r a$.
 \end{definition}
If $a \leq b$ it is common practice to write $a \ra b$ and say that $a$ reduces to $b$. This terminology is useful when pregroups 
are applied to natural language, where each word gets assigned a pregroup type freely generated from a set
of basic elements. The sentence is deemed to be grammatical if the concatenation of the types of the words 
reduce to the simple type of a sentence. For example
reduction for a simple transitive sentence is  $n (n^r s n^l) n \ra 1 s n^l n \ra 1 s 1 \ra s$.

%
 A pregroup \cat{P} is a concrete instance of a compact closed category. The $\eta^l, \eta^r, \epsilon^l,
\epsilon^r$ maps are 
$\eta^l = [ 1 \leq p \cdot p^l ]$, $\epsilon^l =  [ p^l \cdot p \leq 1 ]$, 
$\eta^r = [ 1 \leq p^r \cdot p ]$,  $\epsilon^r = [  p \cdot p^r \leq 1 ]$.
\paragraph{FVect as a concrete compact closed category.} 
Finite dimensional vector spaces over the base field $\R$, together with linear maps form a monoidal category,
referred to as \cat{FVect}.
The monoidal tensor is the usual vector space tensor and the monoidal unit is the base field
$\R$. It is also a compact closed category where
$V^l = V^r =  V$.
The compact closed maps are defined as follows: 

Given a vector space $V$ with basis $\{ \vec{e_i} \}_i$,
\begin{align*}
\begin{aligned}
\eta^l_V = \eta^r_V : \,\, &\R \ra V \tns V \\
&1 \mapsto \sum_i \vec{e_i} \tns \vec{e_i} 
\end{aligned}
&&
\begin{aligned}
\epsilon^l_V = \epsilon^r_V : \,\, &V \tns V \ra \R \\
 &\sum_{ij} c_{ij} \,\, \vec{v_i} \tns \vec{w_i} \mapsto \sum_{ij} c_{ij} \braket{\vec{v_i}}{\vec{w_i}} 
\end{aligned}
\end{align*}
\paragraph{Categorical representation of meaning space.}
The tensor in \cat{FVect} is commutative up to isomorphism. 
This causes the left and the right adjoints to be the same, and thus for the left 
and the right compact closed maps to coincide. Thus \cat{FVect} by itself cannot take the 
effect of word ordering on meaning into account. \cite{coecke2010}  propose a way around this 
obstacle by considering the product category $\cat{FVect} \times \cat{P}$ where $\cat{P}$ is a pregroup. 

Objects in \cat{FVect \times P} are of the form $ (V, p)$, where $V$ is the vector space for the representation of meaning and $p$ is the 
pregroup type. There exists a morphism $(f, \leq): (V,p) \ra (W, q)$ if there exists a morphism $f: V \ra W$ in \cat{FVect}
and $p \leq q$ in \cat{P}. 

The compact closed structure of \cat{FVect} and \cat{P} lifts componentwise to the product category \cat{FVect \times P}:
\begin{align*}
\eta^l: (\R, 1) \ra ( V \otimes V, p \cdot p^l) &&  
\eta^r: (\R, 1) \ra (V \otimes V, p^r \cdot p) \\
\epsilon^l: (V \otimes V, p^l \cdot p) \ra (\R,1) &&
\epsilon^r: (V \otimes V, p \cdot p^r) \ra (\R, 1) 
\end{align*}
\begin{definition}
An object $(V,p)$ in the product category is called a \textbf{meaning space}, where $V$ is the vector space in which the meanings $\vec{v} \in V$
of strings of type $p$ live. 
\end{definition}

\begin{definition}\label{sentenceMap} \textbf{From-meanings-of-words-to-the-meaning-of-the-sentence map.}
Let $v_1 v_2 \ldots v_n$ be a string of words, each $v_i$ with a meaning space representation $\vec{v_i} \in (V_i, p_i)$. 
Let $x \in P$ be a pregroup type such that $[ p_1 p_2 \ldots p_n \leq x ]$ Then the meaning vector for the string is $\vec{ v_1 v_2 \ldots v_n } := f(\vec{v_1} \otimes \vec{v_2} \otimes \ldots \otimes \vec{v_n})  \in (W, x) $,
where $f$ is defined to be the application of the compact closed maps obtained from the reduction $[ p_1 p_2 \ldots p_n \leq x ]$
to the composite vector space $V_1 \otimes V_2 \otimes \ldots \otimes V_n$.
\end{definition}

This framework uses the maps of the pregroup reductions and the elements of objects in \cat{FVect}. 
The diagrammatic calculus provides a tool to reason about both. As an example, take the sentence ``John likes Mary''. It has the pregroup type $n n^r s n^l n$,
 and the vector representations
$\vec{John}, \vec{Mary} \in V$ and $\vec{likes} \in V \otimes S \otimes V$. 
The morphism in \cat{FVect \times P} corresponding to 
the map defined in Definition \ref{sentenceMap}  is of type $( V \otimes (V \otimes S \otimes V) \otimes V, n n^r s n^l n ) \ra (S, s)$.
From the pregroup reduction $[ n n^r s n^l n \ra s ]$ we obtain the compact closed maps 
$\epsilon^r 1 \epsilon^l$. In \cat{FVect} 
this translates into $\epsilon_V \otimes 1_S \otimes \epsilon_V:   V \otimes (V \otimes S \otimes V) \otimes V \ra S $.
 This map, when applied to $\vec{John} \otimes \vec{likes} \otimes \vec{Mary}$ has the following depiction in the diagrammatic calculus:

\begin{center}
\begin{tikzpicture}
	\begin{pgfonlayer}{nodelayer}
		\node [style=state, hflip] (0) at (0, 0) {likes};
		\node [style=none] (1) at (-0.5, 0) {};
		\node [style=none] (2) at (0.5, 0) {};
		\node [style=state,hflip] (3) at (-2.5, 0) {John};
		\node [style=state,hflip] (4) at (2.5, 0) {Mary};
		\node [style=none] (5) at (0, -0.75) {};
	\end{pgfonlayer}
	\begin{pgfonlayer}{edgelayer}
		\draw [arrow=.5] [bend right=90, looseness=1] (3) to (1.center);
		\draw[arrow=.5] [bend left=90, looseness=1] (4.center) to (2);
		\draw [arrow=.5] (0) to (5.center);
	\end{pgfonlayer}
\end{tikzpicture}
\end{center}

Note that this construction treats the verb `likes' essentially as a 
relation that takes two inputs of type $V$, and outputs a vector 
of type $S$. 
For the explicit calculation, note that $ \vec{likes} = \sum_{ijk} c_{ijk} \vec{v_i} \otimes \vec{s_j} \otimes \vec{v_k}$, where $\{\vec{v_i}\}_i$ is an orthonormal basis for $V$ and $\{\vec{s_j}\}_j$ is an orthonormal basis for $S$. Then
\begin{align}
 \vec{John \,\, likes \,\,Mary}  &= \epsilon_V \otimes 1_S \otimes \epsilon_V (\vec{John} \otimes \vec{likes} \otimes\vec{Mary}) \\
&= \sum_{ijk} \braket{\vec{John}}{\vec{v_i}} \vec{s_j} \braket{\vec{v_k}}{\vec{Mary}} 
\end{align} 

The reductions in diagrammatic calculus help reduce the final calculation to a simpler term. 
The non-reduced reduction,
when expressed in dirac notation is $( \bra{\epsilon^r_V } \otimes 1_S \otimes \bra{\epsilon^l_V} ) \circ \ket{\vec{ John} \otimes \vec{likes} \otimes \vec{Mary} }$.
But we can \emph{swing} $\vec{John}$ and $\vec{Mary}$ 	in accord with the reduction rules 
in the diagrammatic calculus. The diagram 
then reduces to:

\begin{center}
\begin{tikzpicture}
	\begin{pgfonlayer}{nodelayer}
		\node [style=state,hflip, xscale=4] (0) at (0, 0) {};
		\node [style=none] (6) at (0,0.25) {likes};
		\node [style=none] (1) at (-1.50, 0) {};
		\node [style=none] (2) at (1.50, 0) {};
		\node [style=none] (3) at (0, -1) {};
		\node [style=state] (4) at (-1.50, -0.25) {John};
		\node [style=state] (5) at (1.50, -0.25) {Mary};
	\end{pgfonlayer}
	\begin{pgfonlayer}{edgelayer}
		\draw [arrow=.5] (0) to (3.center);
		\draw (1.center) to (4);
		\draw (2.center) to (5);
	\end{pgfonlayer}
\end{tikzpicture}
\end{center}

This results in a simpler expression that needs to be calculated: $(\bra{\vec{John}} \otimes 1_S \otimes \bra{\vec{Mary}})  \circ \ket{\vec{likes}}$.

\section{Density Matrices as Elements of a Compact Closed Category}\label{cpm}

Recall that in \cat{FVect}, vectors $\ket{\vec{v}} \in V$
are in one-to-one correspondence with morphisms of type $v: I \ra V$. Likewise, pure states of the form $\ketbra{\vec{v}}{\vec{v}}$ 
are in one-to-one correspondence with morphisms $v \circ v^\dagger: V \ra V$ such that $v^\dagger \circ v = \text{id}_I$, where $v^\dagger$ denotes the adjoint of $v$
(notice that this corresponds to the condition that $ \braket{v}{v} = 1$).  A general (mixed) state $\rho$ is a positive morphism
of the form $\rho: V \ra V$. 
One can re-express the mixed states $\rho: V \ra V$ as elements $\rho: I \ra V^* \otimes V$. 
Here $V^*= V^l = V^r  = V$. 
\begin{definition}
 $f$ is a \textbf{completely positive map} if $f$ is positive for any positive operator $A$, and
 $(\text{ id}_V \otimes f) B$ is positive for any positive operator $B$ and any space $V$.  
\end{definition}

Completely positive maps in \cat{FVect} form a monoidal category \citep{selinger2007}.
Thus one can define a new category \cat{CPM(FVect)} where the objects of \cat{CPM(FVect)} are the same as those of \cat{FVect}, and
morphisms $A \ra B$ in \cat{CPM(FVect)} are completely positive maps $A^* \tns A \ra B^* \tns B$
in \cat{FVect}. The elements $I \ra A$ in \cat{CPM(FVect)} are of the form 
$I^* \tns I \ra A^* \tns A$ in \cat{FVect}, providing a monoidal category with density 
matrices as its elements. 

\paragraph{CPM(FVect) in graphical calculus.}
 A morphism $\rho: A \ra A$ is positive if and only if there exists a map $\sqrt{\rho}$ 
 such that $\rho = \sqrt{\rho}^\dagger \circ \sqrt{\rho}$. In \cat{FVect}, the isomorphism between
 $\rho: A \ra A$ and $\ulcorner\rho\urcorner : I \ra A^* \tns A$ is provided by $\eta^l = \eta^r$. 
The graphical representation of $\rho$ in \cat{FVect} then becomes:
\tikzset{wedge}
\begin{center}
\begin{tikzpicture}
	\begin{pgfonlayer}{nodelayer}
		\node [style=none] (0) at (-5, 1) {};
		\node [style=none] (1) at (-4, 1) {};
		\node [morphism] (2) at (-4, 0) {$\rho$};
		\node [style=none] (3) at (-5, -1) {};
		\node [style=none] (4) at (-4, -1) {};
		\node [style=none] (5) at (-2.75, 0) {=};
		\node [style=none] (6) at (-2, -1) {};
		\node [style=none] (7) at (-2, 1) {};
		\node [style=none] (8) at (-1, 1) {};
		\node [style=none] (9) at (-1, -1) {};
		\node [morphism, hflip] (10) at (-1, 0.5) {$\sqrt{\rho}$};
		\node [morphism] (11) at (-1, -0.5) {$\sqrt{\rho}$};
		\node [style=none] (12) at (0, 0) {=};
		\node [style=none] (13) at (1, -1) {};
		\node [style=none] (14) at (1, 1) {};
		\node [style=none] (15) at (2, 1) {};
		\node [style=none] (16) at (2, -1) {};
		\node [morphism, vflip] (17) at (1, 0) {$\sqrt{\rho}$};
		\node [morphism] (18) at (2, 0) {$\sqrt{\rho}$};
	\end{pgfonlayer}
	\begin{pgfonlayer}{edgelayer}
		\draw [arrow=0.5, bend left=90, looseness=1] (0.center) to (1.center);
		\draw  (1.center) to (2.north);
		\draw [arrow=0.5] (2.south) to (4.center);
		\draw [arrow=0.5] (3.center) to (0.center);
		\draw [arrow=0.5] (6.center) to (7.center);
		\draw [arrow=0.5] [bend left=90, looseness=1] (7.center) to (8.center);
		\draw  (8.center) to (10.north);
		\draw  [arrow=0.5] (10.south) to (11.north);
		\draw  (11.south) to (9.center);
		\draw [arrow=0.5] (13.center) to (17.south);
		\draw  (17.north) to (14.center);
		\draw [arrow=0.5] [bend left=90, looseness=1] (14.center) to (15.center);
		\draw  (15.center) to (18.north);
		\draw [arrow=0.5] (18.south) to (16.center);
	\end{pgfonlayer}
\end{tikzpicture} 
\end{center}
Notice that the categorical definition of a positive morphism coincides with the definition of a positive operator in a vector space, where $\sqrt{\rho}$ is the square root of the operator. 

The graphical depiction of completely positive morphisms come from the following theorem:
 \begin{theorem} (Stinespring Dilation Theorem) 
 $f: A^* \otimes A \ra B^* \otimes B$ is completely positive if and only if there is an object $C$ and a morphism $\sqrt{f}: A \ra C \ts B$ such that the following equation holds: 
\begin{center}
\begin{tikzpicture}
	\begin{pgfonlayer}{nodelayer}
		\node [morphism] (0) at (-3.75, 0) {$f$};
		\node [morphism] (1) at (0, 0) {$\sqrt{f}$};
		\node [style=none] (2) at (-2.75, 0) {=};
		\node [morphism, vflip] (3) at (-1.5, 0) {$\sqrt{f}$};
		\node [style=none, label=A] (4) at (-4, 1) {};
		\node [style=none, label=A] (5) at (-3.5, 1) {};
		\node [style=none] (6) at (-4, -1) {};
		\node [style=none] (7) at (-3.5, -1) {};
		\node [style=none] (8) at (-1.75, 1) {};
		\node [style=none] (9) at (0.25, 1) {};
		\node [style=none] (10) at (-1.75, -1) {};
		\node [style=none] (11) at (0.25, -1) {};
		\node [style=none] (12) at (-0.75, -1) {C};
	\end{pgfonlayer}
	\begin{pgfonlayer}{edgelayer}
		\draw [arrow=0.5] (0.north west) to (4.center) ;
		\draw [arrow=0.5] (6.center) node[below] {B} to (0.south west);
		\draw [arrow=0.5] (5.center) to (0.north east);
		\draw [arrow=0.5] (0.south east)to (7.center) node[below] {B} ;
		\draw [arrow=0.5] (3.north west) to (8.center) node [above] {A}  ;
		\draw [arrow=0.5] (10.center) node[below]{B} to (3.south west);
		\draw [arrow=0.5] (9.center) node [above] {A} to (1.north east)   ;
		\draw [arrow=0.5] (1.south east) to (11.center) node[below] {B} ;
		\draw [arrow=0.5]  [bend left=90, looseness=1] (1.south west) to (3.south east)  ;
	\end{pgfonlayer}
\end{tikzpicture}
\end{center}
 \end{theorem}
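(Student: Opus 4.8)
The plan is to prove the two implications separately, treating the backward direction (soundness of the dilation form) as routine and concentrating effort on the forward direction, which is the genuine content. For the \emph{if} direction, suppose we are given $\sqrt{f}: A \ra C \tns B$ and define $f$ by the right-hand diagram, i.e. by placing a copy of $\sqrt{f}$ alongside its vertically-flipped conjugate and contracting the two $C$-legs with a cap. Reading off a basis $\{\vec{c_k}\}_k$ for $C$ turns $\sqrt{f}$ into a family of operators $K_k : A \ra B$, and the contraction along $C$ makes $f$ act as $\rho \mapsto \sum_k K_k\, \rho\, K_k^\dagger$. I would then verify directly that such a map is completely positive: it sends positive operators to positive operators because each summand $K_k \rho K_k^\dagger$ is positive, and the same computation carried out with an inert spectator space $V$ present — the wire for $V$ simply runs past the $\sqrt{f}$-boxes without interacting — shows that $\id[V] \tns f$ preserves positivity as well. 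Diagrammatically this second point is immediate from the fact that the $V$-strand does not meet the $\sqrt{f}$-boxes.

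For the \emph{only if} direction, suppose $f: A^* \tns A \ra B^* \tns B$ is completely positive. First I would transport $f$ across the name isomorphism $\name{f}$ supplied by the $\eta$ maps of the compact closed structure of \cat{FVect}, obtaining the associated Choi operator $\Phi_f$ on $A \tns B$. The key step is to show that complete positivity of $f$ forces $\Phi_f$ to be a positive operator: applying $\id[A] \tns f$ to the (unnormalised) maximally entangled state $\eta^l_A$ and invoking the defining stability of complete positivity under tensoring with an identity yields exactly the positivity of $\Phi_f$. Once $\Phi_f \geq 0$, it admits a square root in \cat{FVect}, equivalently a factorisation $\Phi_f = h^\dagger \circ h$ through some object $C$; bending the wires of $h$ back by the snake identities repackages it as a morphism $\sqrt{f}: A \ra C \tns B$, and reversing the transport across $\name{f}$ recovers the right-hand diagram.

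The main obstacle is precisely this positivity step. In finite dimensions a map can preserve positivity without being completely positive — the transpose is the standard counterexample — so merely knowing that $f$ sends positive operators to positive operators is \emph{not} enough to produce a square root. The argument must genuinely use the full complete-positivity hypothesis, namely stability under $\tns\, \id[V]$ for an arbitrary ancilla $V$, with the choice $V = A$ applied to the maximally entangled state doing the real work, in order to conclude that $\Phi_f$ is positive. Extracting $\sqrt{f}$ afterwards is then only the existence of a square root of a positive operator together with the compact-closed bending of wires, both of which are already available in the graphical calculus set up above.
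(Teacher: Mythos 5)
The paper never proves this theorem itself: immediately after the statement it remarks that $\sqrt{f}$ and $C$ are not unique and defers the proof entirely to Selinger \cite{selinger2007}, so there is no in-paper argument to compare yours against. Judged on its own merits, your sketch is the standard finite-dimensional Choi--Kraus argument and it is essentially correct. The backward direction is right: reading a basis $\{\vec{c_k}\}_k$ of $C$ off the doubled diagram gives Kraus operators $K_k \colon A \to B$ with $f(\rho) = \sum_k K_k \rho K_k^\dagger$, each summand is positive, and an inert ancilla wire commutes past the $\sqrt{f}$-boxes, so $\id[V] \otimes f$ has the same form and complete positivity follows. The forward direction also takes the correct route: applying complete positivity with ancilla $A$ to the (unnormalised) maximally entangled state forces the Choi operator $\Phi_f$ on $A \otimes B$ to be positive, and factoring $\Phi_f = h^\dagger \circ h$ through some $C$ and bending wires with the snake identities repackages $h$ as $\sqrt{f}\colon A \to C \otimes B$. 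Your observation that positivity alone would not suffice (the transpose map being the standard counterexample) correctly identifies where the full hypothesis does real work; this is exactly the subtlety that makes the theorem nontrivial. The one step you leave implicit is the verification that the dilation reconstructed from $\Phi_f$ really equals $f$, i.e.\ the invertibility of the Choi correspondence, which amounts to the identity $f(\ketbra{i}{j}) = (\bra{i} \otimes 1_B)\, \Phi_f\, (\ket{j} \otimes 1_B)$ or its wire-bending equivalent; this is a routine snake-identity computation rather than a gap in the idea. What the paper's citation buys instead of your argument is generality: Selinger's proof is cast for dagger compact closed categories and underwrites the whole $\CP$M construction abstractly, whereas your argument is specific to \cat{FVect}, which is nonetheless all this paper actually uses.
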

 
$\sqrt{f}$ and $C$ here are not unique. For the proof of the theorem see \cite{selinger2007}.

\begin{theorem}
\cat{CPM(FVect)} is a compact closed category where
as in \cat{FVect}, $V^r = V^l = V$ and the compact closed maps are defined to be:
\begin{align*}
\eta^l = (\eta^r_{V} \tns \eta^l_{V} ) \circ ( \bo{1}_V \tns \sigma \tns \bo{1}_V)  &&
\eta^r = (\eta^l_{V} \tns \eta^r_{V} ) \circ ( \bo{1}_V \tns \sigma \tns \bo{1}_V) \\
\epsilon^l = ( \bo{1}_V \tns \sigma \tns \bo{1}_V) \circ (\epsilon^r_{V} \tns \epsilon^l_{V}) &&
 \epsilon^r = ( \bo{1}_V \tns \sigma \tns \bo{1}_V) \circ (\epsilon^l_{V} \tns \epsilon^r_{V})
 \end{align*}
where $\sigma$ is the swap map defined as $\sigma( v \ts w) = (w \ts v)$.   
\end{theorem}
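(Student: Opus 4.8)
The plan is to exhibit \cat{CPM(FVect)} as the image of \cat{FVect} under a \emph{doubling functor} and to transport the compact closed structure along it, rather than checking the four snake identities by brute force. Write $D$ for the assignment that fixes objects and sends a morphism $f\colon A\to B$ of \cat{FVect} to the \emph{pure} completely positive map it induces, namely the map built from two copies of $f$ (one on the conjugate space) acting on $A^{*}\otimes A\to B^{*}\otimes B$. By the Stinespring dilation theorem above, every morphism of \cat{CPM(FVect)} factors through such pure maps, and in particular each of these is completely positive by its evident single-operator (one-term Kraus) form, so $D$ is well defined on morphisms. The key structural observation is that $D$ is \emph{strong monoidal}: it is strict on objects, but the coherence isomorphism $D(A)\otimes D(B)\xrightarrow{\sim} D(A\otimes B)$ is exactly the regrouping $\mathbf{1}_V\otimes\sigma\otimes\mathbf{1}_V$ that interleaves the conjugate and non-conjugate copies of the underlying spaces. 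Establishing this coherence — that the $\sigma$'s are natural and satisfy the triangle, pentagon, and hexagon conditions — is the first technical step.

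Next I would invoke the standard fact that a strong monoidal functor preserves dual pairs: if $(A,A^{*},\eta_A,\epsilon_A)$ witnesses a duality in \cat{FVect}, then $(DA,DA^{*})$ is a dual pair in \cat{CPM(FVect)} whose unit and counit are obtained by applying $D$ to $\eta_A,\epsilon_A$ and composing with the coherence isomorphisms. Since in \cat{FVect} we already have $V^{*}=V^{l}=V^{r}=V$, this immediately yields $V^{l}=V^{r}=V$ in \cat{CPM(FVect)}. Writing out the transported unit and counit — the images $D(\eta),D(\epsilon)$ pre- and post-composed with the $\sigma$-regroupings — is then a direct computation that I expect to reproduce the displayed formulas: the two units (respectively counits) differ only in whether $\eta^{r}_V$ or $\eta^{l}_V$ occupies which tensor slot, matching the formal left/right distinction of the statement.

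Finally, the snake identities in \cat{CPM(FVect)} follow formally once $D$ is known to be strong monoidal and dual-preserving, since each zig-zag is the $D$-image of the corresponding \cat{FVect} snake. Concretely, because composition of pure maps is again pure, a \cat{CPM} zig-zag unfolds into two independent copies of the \cat{FVect} zig-zag, one on the conjugate layer and one on the original layer, each of which collapses by the compact closure of \cat{FVect}; the only point to watch is that the inserted $\sigma$ maps meet in adjacent pairs and cancel, which is guaranteed by $\sigma\circ\sigma=\mathbf{1}$.

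I expect the main obstacle to be the bookkeeping in the first step: verifying that the regrouping maps $\mathbf{1}_V\otimes\sigma\otimes\mathbf{1}_V$ genuinely assemble into a monoidal coherence for $D$, since everything downstream reduces to this together with the already-established compact closure of \cat{FVect}. An alternative, if one prefers to avoid the abstract functor, is to verify complete positivity of the four maps directly — each is pure, hence manifestly completely positive — and then check a single representative snake identity by string-diagram deformation, the remaining three following by the evident $\eta$/$\epsilon$ and left/right symmetry of the construction.
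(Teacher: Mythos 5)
Your proposal is correct, but it reaches the result by a more abstract route than the paper. The paper's own proof is a direct graphical one: it observes that the \cat{CPM(FVect)} structure maps are just the \cat{FVect} maps with objects and wires \emph{doubled} (the $\sigma$'s interleaving the two layers), draws the diagram for $\eta^r$, and then concludes that each snake identity decouples into two independent copies of the corresponding \cat{FVect} snake --- ``these maps satisfy the axioms of compact closure since the components do.'' Your primary route --- transporting dual pairs along a doubling functor $D$ shown to be monoidal --- is a functorial repackaging of that same observation, and is in fact closer in spirit to Selinger's general treatment of $\mathrm{CPM}(\mathcal{C})$ for an arbitrary dagger compact closed $\mathcal{C}$, which the paper cites only for the monoidal structure. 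What your route buys is generality and the elimination of diagram-chasing once the coherence of $D$ is established; what it costs is precisely that coherence verification, which is comparable in effort to the paper's direct check. Note that the ``alternative'' you sketch in your final paragraph (one-term Kraus maps are manifestly completely positive; verify a single snake by deformation, the rest by left/right symmetry) \emph{is} essentially the paper's proof. Two minor wrinkles in your write-up, neither fatal: first, Stinespring is not needed for $D$ to be well defined --- you only need that single-Kraus maps are completely positive and compose to single-Kraus maps, both of which you note anyway; second, since $D$ is the identity on objects and the $\sigma$-regrouping is already built into the definition of the \cat{CPM} tensor of morphisms, $D$ is in fact \emph{strict} monoidal, so the ``coherence isomorphism'' you describe lives at the level of the underlying \cat{FVect} representatives of \cat{CPM} morphisms rather than as genuine extra data; the real prerequisite is that \cat{CPM(FVect)} is monoidal at all, which the paper (and implicitly your argument) takes from Selinger's result.
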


\begin{proof}
The graphical construction of the compact closed maps boils down to doubling the objects and the wires. The identities are proved by adding bends in the wires. 
Consider the diagram for $\eta^r$:

\begin{center}
\begin{tikzpicture}
	\begin{pgfonlayer}{nodelayer}
		\node [style=none] (0) at (1, 0.5) {};
		\node [style=none] (1) at (1.5, 0.5) {};
		\node [style=none] (2) at (1, 0.25) {};
		\node [style=none] (3) at (1.5, 0.25) {};
		\node [style=none] (4) at (0.75, 0.75) {};
		\node [style=none] (5) at (0.25, 0.75) {};
		\node [style=none] (6) at (0.75, 0) {};
		\node [style=none] (7) at (0.25, 0) {};
		\node [style=none] (8) at (1.75, 0.75) {};
		\node [style=none] (9) at (2.25, 0.75) {};
		\node [style=none] (10) at (1.75, 0) {};
		\node [style=none] (11) at (2.25, 0) {};
		\node [style=none] (12) at (0.25, -0.25) {};
		\node [style=none] (13) at (0.75, -0.25) {};
		\node [style=none] (14) at (1.75, -0.25) {};
		\node [style=none] (15) at (2.25, -0.25) {};
		\node [style=none] (16) at (-3.25, -0.25) {};
		\node [style=none] (17) at (-2.75, -0.25) {};
		\node [style=none] (18) at (-1.75, -0.25) {};
		\node [style=none] (19) at (-1.25, -0.25) {};
		\node [style=none] (20) at (-0.75, 0.5) {=};
		\node [style=none] (21) at (-0.25, 0.75) {};
		\node [style=none] (22) at (2.75, 0.75) {};
		\node [style=none] (23) at (-0.25, 0) {};
		\node [style=none] (24) at (2.75, 0) {};
		\node [style=none] (25) at (-3.25, -0.5) {$V^*$};
		\node [style=none] (26) at (-2.75, -0.5) {$V$};
		\node [style=none] (27) at (-1.75, -0.5) {$V^*$};
		\node [style=none] (28) at (-1.25, -0.5) {$V$};
		\node [style=none] (29) at (0.25, -0.5) {$V^*$};
		\node [style=none] (30) at (0.75, -0.5) {$V$};
		\node [style=none] (31) at (1.75, -0.5) {$V^*$};
		\node [style=none] (32) at (2.25, -0.5) {$V$};
	\end{pgfonlayer}
	\begin{pgfonlayer}{edgelayer}
		\draw (6.center) to (13.center);
		\draw [in=90, out=-150] (2.center) to (6.center);
		\draw [reverse arrow=0] (1.center) to (2.center);
		\draw [in=27, out=-90, looseness=0.50] (8.center) to (1.center);
		\draw [reverse arrow=0.5, bend right=90, looseness=2.25] (9.center) to (8.center);
		\draw [reverse arrow=.3] (11.center) to (9.center);
		\draw (15.center) to (11.center);
		\draw (12.center) to (7.center);
		\draw [reverse arrow=.3] (7.center) to (5.center);
		\draw [reverse arrow=.5, bend left=90, looseness=2.50] (5.center) to (4.center);
		\draw [in=150, out=-90, looseness=0.75] (4.center) to (0.center);
		\draw [reverse arrow=0] (0.center) to (3.center);
		\draw [in=90, out=-15] (3.center) to (10.center);
		\draw (10.center) to (14.center);
		\draw  [arrow=0.7, bend right=90, looseness=2.75] (18.center) to (16.center);
		\draw [arrow=0.7, bend left=90, looseness=2.75] (17.center) to (19.center);
		\draw [dashed] (21.center) to (22.center);
		\draw [dashed] (23.center) to (24.center);
	\end{pgfonlayer}
\end{tikzpicture}
\end{center}
These maps satisfy the axioms of compact closure since the components do.
\end{proof}

The concrete compact closed maps are as follows:
\begin{align*}
\eta^l = \eta^r &: \R \ra (V \tns V) \tns (V \tns V) \\
::& 1 \mapsto \sum_i \vec{e_i} \tns \vec{e_i}  \tns \sum_j \vec{e_j} \tns \vec{e_j} \\
\epsilon^l = \epsilon^r &: (V \tns V) \tns (V \tns V) \ra \R \\
::& \sum_{ijkl} c_{ijkl} \,\, \vec{v_i} \tns \vec{w_j}  \tns \vec{u_k} \tns \vec{p_l} \mapsto
 \sum_{ijkl} c_{ijkl} \,\, \braket{\vec{v_i}} { \vec{u_k}} \braket{ \vec{w_j}} { \vec{p_l}}
\end{align*}
  Let $ \rho : V_1 \tns V_2 \tns \ldots \tns V_n \ra V_1 \tns V_2 \tns \ldots \tns V_n $ be a density operator defined on an arbitrary composite space $V_1 \tns V_2 \tns \ldots \tns V_n$. 
 Then it has the density matrix representation $\rho : I \ra ( V_1 \tns V_2 \tns \ldots \tns V_n )^* \tns ( V_1 \tns V_2 \tns \ldots \tns V_n ) $.
  Since the underlying category \cat{FVect} is symmetric, it has the swap map $\sigma$. This provides us with 
 the isomorphism: 
 \[ ( V_1 \tns V_2 \tns \ldots \tns V_n )^* \tns ( V_1 \tns V_2 \tns \ldots \tns V_n ) \sim
  ( V_1^* \tns V_1 ) \tns (V_2^* \tns V_2) \tns \ldots \tns (V_n^* \tns V_n) \]  
  So $\rho$ can be equivalently expressed as $\rho : I \ra ( V_1^* \tns V_1 ) \tns (V_2^* \tns V_2) \tns \ldots \tns (V_n^* \tns V_n)$. With this addition, we can simplify the diagrams used to express density matrices by using a single 
thick wire for the doubled wires. Doubled compact closed maps can likewise be expressed by a single thick wire.

\begin{center}  
  \begin{tikzpicture}
	\begin{pgfonlayer}{nodelayer}
		\node [style=none] (0) at (-2, 1.75) {};
		\node [style=none] (1) at (-2, 1) {};
		\node [style=none] (2) at (-0.5, 1) {};
		\node [style=none] (3) at (-0.5, 1.75) {};
		\node [style=none] (4) at (-0.25, 1.75) {};
		\node [style=none] (5) at (-0.25, 1) {};
		\node [style=none] (6) at (-1.25, 1.25) {$:=$};
	\end{pgfonlayer}
	\begin{pgfonlayer}{edgelayer}
		\draw[arrow=.5, line width=2] (0.center) to (1.center);
		\draw [arrow=.5](2.center) to (3.center);
		\draw [arrow=.5] (4.center) to (5.center);
	\end{pgfonlayer}
\end{tikzpicture}
\hspace{10mm}
\begin{tikzpicture}
	\begin{pgfonlayer}{nodelayer}
		\node [style=none] (0) at (-1.75, 1) {};
		\node [style=none] (1) at (-0.75, 1) {};
		\node [style=none] (4) at (-1, 1) {};
		\node [style=none] (5) at (-2, 1) {};
		\node [style=none] (10) at (4.25, 1) {};
		\node [style=none] (12) at (4, 1) {};
		\node [style=none] (14) at (3.25, 1) {};
		\node [style=none] (15) at (3, 1) {};
		\node [style=none] (17) at (1.75, 1) {};
		\node [style=none] (19) at (0.75, 1) {};
		\node [style=none] (20) at (-3.25, 1) {};
		\node [style=none] (22) at (-4.25, 1) {};
		\node [style=none] (24) at (-2.75, 1.25) {$:=$};
		\node [style=none] (26) at (2.25, 1.25) {$:=$};
	\end{pgfonlayer}
	\begin{pgfonlayer}{edgelayer}
		\draw [arrow=.7, bend left=90, looseness=2.25] (0.center) to (1.center);
		\draw [arrow=.7,bend right=90, looseness=2.25] (4.center) to (5.center);
		\draw [arrow=.7,bend left=90, looseness=2.25] (15.center) to (12.center);
		\draw [arrow=.7,bend right=90, looseness=2.25] (10.center) to (14.center);
		\draw [arrow=.5, line width=2] [bend left=90, looseness=2.25] (22.center) to (20.center);
		\draw  [arrow=.5, line width=2]  [bend right=90, looseness=2.25] (17.center) to (19.center);
	\end{pgfonlayer}
\end{tikzpicture}

\end{center}

The diagrammatic expression of a from-meanings-of-words-to-the-meaning-of-the-sentence map 
using density matrices will therefore look exactly like the depiction of it in \cat{FVect}, but with thick wires. 
\section{Using Density Matrices to Model Meaning}

If one wants to use the full power of density matrices in modelling meaning, one needs to 
establish an interpretation for the distinction 
between \textit{mixing} and \textit{superposition} in the context of linguistics. Let \textit{contextual features} be the salient, quantifiable features of the 
contexts a word is observed in. Let the basis of the space be indexed by such contextual features. Individual \textit{contexts}, such as words in an $n$-word window of a text, can be represented as the superposition of the bases corresponding to the contextual features observed in it. So each context corresponds to a pure state. Words are then probability distributions over the contexts they appear in. The simple co-occurrence model can be cast as a special case of this more general approach, 
where features and contexts are the same. Then all word meanings are mixtures of basis vectors, and they all commute with each other. 

\paragraph{Similarity for density matrices.}
 Fidelity is a good measure of similarity between two density matrix representations of meaning because of its properties listed below.

\begin{definition}
The \textit{fidelity} of two density operators $\rho$ and $\sigma$ is 
$ F( \rho, \sigma) := \tr \sqrt{\rho^{1/2} \sigma \rho^{1/2}}$.
\end{definition}
Some useful properties of fidelity are:
\begin{enumerate}
\item $F(\rho, \sigma) = F(\sigma, \rho)$
\item $0 \leq F(\rho,\sigma) \leq 1$
\item $F(\rho, \sigma) = 1$ if and only if $\rho = \sigma$
\item If $\ketbra{\phi}{\phi}$ and $\ketbra{\psi}{\psi}$ are two pure states, their fidelity is 
equal to $|\braket{\phi}{\psi}|$.
\end{enumerate}

These properties ensure that if the representations of two words are not equal to each other, they will not be judged perfectly similar, and, if two words are represented as projections onto one dimensional subspaces, their similarity value will
be equal to the usual cosine similarity of the vectors. 

\paragraph{Entailment for density matrices.} To develop a theory of entailment using density matrices as the basic representations, we assume the following hypothesis:

\begin{definition}[Distributional Hypothesis for Hyponymy]
 The meaning of a word $w$ subsumes the meaning of a word $v$ if and only 
if it is appropriate to use $w$ in all the contexts $v$ is used.
\end{definition}

This is a slightly more general version of the \textit{Distributional Inclusion Hypothesis} (DIH) stated in \cite{kotlerman2010}.
The difference lies in the additional power the density matrix formalism provides: the distinction between 
mixing and superposition. Further, DIH only considers  whether or not the target word occurs together
with the salient distributional feature at all, and ignores any possible statistically significant correlations of features; here again, the density matrix formalism offers a solution. 

Note that \cite{geffet2005} show that while there is ample evidence for the distributional inclusion hypothesis, this in 
itself does not necessarily provide a method to detect hyponymy-hypernymy pairs.
One of their suggestions for improvement
is to consider more than one word in the features, equivalent to what we do here by taking correlations into account in a co-occurrence 
space where the bases are context words.

 Relative entropy quantifies the distinguishability
of one distribution from another. 
The idea of using relative entropy to model hyponymy is based on the assumption that the distinguishability
of one word from another given its usual contexts provides us with a good metric for hyponymy. For example, if one is given a sentence with the word \textit{dog} crossed out, it will be not be possible for sure to know whether the crossed out word is not \textit{animal} just from the context (except perhaps very particular decelerational sentences which rely on world knowledge, such as `All -- bark'.)

\begin{definition}
The \textbf{(quantum) relative entropy} of two density matrices $\rho$ and $\sigma$ is
$N(\rho || \sigma) := \tr (\rho \log \rho) - \tr (\rho \log \sigma) $,
where $0 \log 0 = 0$ and $x \log 0 = \infty$ when $x \neq 0$ by convention.
\end{definition} 

\begin{definition}
The \textbf{representativeness} between $\rho$ and $\sigma$ is $R(\rho, \sigma) := 1/(1 + N( \rho || \sigma)) $,
where $N( \rho || \sigma)$ is the quantum relative entropy between $\rho$ and $\sigma$.
\end{definition}

Quantum relative entropy is always non-negative.  For two density matrices $\rho$ and $\sigma$, $N( \rho || \sigma) = \infty$ if $\text{supp}(\rho) \cap \ker(\sigma) \neq 0$,
and is finite otherwise. The following is a direct consequence of these properties:

\begin{corollary}\label{kernel}
For all density matrices $\rho$ and $\sigma$, $ R(\rho, \sigma) \leq 1$ with equality if and only if $\rho = \sigma$, 
and $0 \leq R(\rho, \sigma) $ with equality if and only if  
 $\text{supp}(\rho) \cap \text{ker}(\sigma) \neq 0$ 
\end{corollary}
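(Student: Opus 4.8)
The plan is to read $R(\rho,\sigma)$ as the image of the relative entropy under the map $\phi(x)=1/(1+x)$ and to push the two properties of $N$ recorded just above the corollary—non-negativity and the finiteness criterion—through $\phi$. The key observation is that $\phi$ is strictly decreasing on $[0,\infty]$, carrying $0$ to $1$ and, in the limit with $\phi(\infty)=0$, $\infty$ to $0$. Since $R(\rho,\sigma)=\phi\bigl(N(\rho\|\sigma)\bigr)$ and $N(\rho\|\sigma)\ge 0$ always, this immediately confines $R(\rho,\sigma)$ to $[0,1]$; what remains is to identify the two equality cases.

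First I would treat the upper bound. From $N(\rho\|\sigma)\ge 0$ we get $1+N(\rho\|\sigma)\ge 1$, hence $R(\rho,\sigma)\le 1$, with equality exactly when $N(\rho\|\sigma)=0$. The implication $\rho=\sigma\Rightarrow N(\rho\|\sigma)=0$ is immediate, since the two trace terms in the definition of $N$ then cancel; the substance is the converse. For this I would invoke the equality case of \emph{Klein's inequality}, which states that the quantum relative entropy vanishes if and only if its two arguments coincide. This is the one genuinely non-routine ingredient, and I would cite it rather than reprove it.

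For the lower bound, non-negativity of $N$ gives $R(\rho,\sigma)=1/\bigl(1+N(\rho\|\sigma)\bigr)\ge 0$ directly. The value $R(\rho,\sigma)=0$ can arise only in the limit $N(\rho\|\sigma)=\infty$, under the convention $\phi(\infty)=0$. By the finiteness criterion quoted above the corollary, $N(\rho\|\sigma)=\infty$ precisely when $\mathrm{supp}(\rho)\cap\ker(\sigma)\neq 0$, which yields the stated characterisation of the case $R(\rho,\sigma)=0$.

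In short, the whole statement is a transport of the two listed properties of $N$ through the strictly decreasing bijection $\phi$, and the only step that is not pure bookkeeping is the equality half of Klein's inequality needed to upgrade $R(\rho,\sigma)=1$ to $\rho=\sigma$. The mild subtlety worth flagging is that the lower-bound equality requires interpreting $R$ as the continuous extension of $\phi\circ N$ at $N=\infty$; with that reading nothing further needs to be computed.
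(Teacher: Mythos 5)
Your proof is correct and follows essentially the same route as the paper, which offers no detailed argument and simply presents the corollary as a direct consequence of the two properties of $N$ listed immediately before it (non-negativity, and the finiteness criterion in terms of $\text{supp}(\rho) \cap \ker(\sigma)$), exactly the properties you transport through the decreasing map $\phi(x) = 1/(1+x)$. If anything you are more careful than the paper: the equality case $N(\rho\|\sigma) = 0 \Leftrightarrow \rho = \sigma$ genuinely requires the equality half of Klein's inequality, which the paper leaves implicit in the phrase ``always non-negative,'' and you correctly isolate and cite it.
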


The second part of the corollary reflects the idea that if there is a context in which it is appropriate to use $v$ but not $w$, then
$v$ is perfectly distinguishable from $w$. Such contexts are exactly those that fall within $\text{supp}(\rho) \cap \text{ker}(\sigma)$.

\paragraph{Characterizing hyponyms.} The quantitative measure on density matrices given by  representativeness
provide a qualitative preorder on meaning representations as follows: 
\begin{align*}
\rho \prec \sigma &\text{ if } R(\rho, \sigma) > 0 \\
 \rho \sim \sigma &\text{ if }   \prec \sigma \text{ and } \sigma \prec \rho \\
\end{align*}

\vspace{-8mm}

\begin{proposition}\label{repProp}
The following are equivalent:
\begin{enumerate}
\item $\rho  \prec \sigma$
\item $\text{supp}(\rho) \subseteq \text{supp}(\sigma)$
\item There exists a positive operator $\rho'$ and $p > 0$ such that $\sigma = p \rho +  \rho'$
\end{enumerate}
\end{proposition}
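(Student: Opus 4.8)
The plan is to establish the two equivalences (1)$\Leftrightarrow$(2) and (2)$\Leftrightarrow$(3) separately, leaning on Corollary~\ref{kernel} for the first and on elementary operator positivity for the second. For (1)$\Leftrightarrow$(2): by definition $\rho\prec\sigma$ means $R(\rho,\sigma)>0$, and since $R(\rho,\sigma)=1/(1+N(\rho\|\sigma))$ this holds exactly when $N(\rho\|\sigma)<\infty$. I would then invoke the finiteness criterion for relative entropy recorded just before Corollary~\ref{kernel}: the divergence $N(\rho\|\sigma)$ is finite precisely when $\text{supp}(\rho)\subseteq\text{supp}(\sigma)$, equivalently when $\ker(\sigma)\subseteq\ker(\rho)$. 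This yields (1)$\Leftrightarrow$(2) directly.

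For (3)$\Rightarrow$(2): assume $\sigma=p\rho+\rho'$ with $p>0$ and $\rho'\geq 0$. Take any $v\in\ker(\sigma)$; then $0=\langle v|\sigma|v\rangle=p\langle v|\rho|v\rangle+\langle v|\rho'|v\rangle$, where both summands are non-negative, so $\langle v|\rho|v\rangle=0$, and positivity of $\rho$ forces $\rho v=0$. Hence $\ker(\sigma)\subseteq\ker(\rho)$, i.e. $\text{supp}(\rho)\subseteq\text{supp}(\sigma)$, which is (2).

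The substantive direction is (2)$\Rightarrow$(3). Assuming $\text{supp}(\rho)\subseteq\text{supp}(\sigma)$, both operators vanish on $\ker(\sigma)$ and map into $\text{supp}(\sigma)$, so I would restrict the whole argument to the finite-dimensional subspace $\text{supp}(\sigma)$, on which $\sigma$ is positive-definite. Writing $P$ for the projector onto $\text{supp}(\sigma)$ and $\lambda>0$ for the least eigenvalue of $\sigma$ restricted to its support, one has $\sigma\geq\lambda P$ and $\rho\leq\|\rho\|\,P$. Setting $p:=\lambda/\|\rho\|>0$ (and, in the trivial case $\rho=0$, taking $\rho'=\sigma$ with any $p>0$) gives $p\rho\leq\lambda P\leq\sigma$, so $\rho':=\sigma-p\rho$ is positive and $\sigma=p\rho+\rho'$, as required.

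The only real obstacle is (2)$\Rightarrow$(3): globally $\sigma$ is merely positive-semidefinite, so no positive multiple of $\rho$ could be dominated by $\sigma$ on $\ker(\sigma)$ unless $\rho$ already vanishes there. The role of hypothesis~(2) is exactly to guarantee this, which is what licenses passing to $\text{supp}(\sigma)$ and extracting a strictly positive least eigenvalue $\lambda$; once that reduction is in place the choice of $p$ is essentially forced and the remaining verifications are routine. The equivalence (1)$\Leftrightarrow$(2) is then just a restatement of the relative-entropy finiteness property already available.
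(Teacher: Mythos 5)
Your proof is correct and takes essentially the same route as the paper's: (1)$\Leftrightarrow$(2) via the finiteness criterion for relative entropy (Corollary~\ref{kernel}), (2)$\Rightarrow$(3) by producing a $p>0$ with $\sigma - p\rho$ positive, and (3)$\Rightarrow$(2) by support monotonicity of positive sums. The only difference is one of detail: the paper merely asserts that such a $p$ exists, whereas you construct it explicitly as the least eigenvalue of $\sigma$ on its support divided by $\|\rho\|$ (handling $\rho=0$ separately), and you likewise spell out the kernel-inclusion argument that the paper compresses into one line.
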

\begin{proof}
$(1) \Rightarrow (2)$ and $(2) \Rightarrow (1)$ follow directly from Corollary \ref{kernel}. 

$(2) \Rightarrow (3)$ since $\text{supp}(\rho) \subseteq \text{supp}(\sigma)$ implies that there exists a $p >0$ 
such that $\sigma - p \rho$ is positive. Setting $\rho' = \sigma - p \rho$ gives the desired equality.

$(3) \Rightarrow (2)$ since $p >0$, and so $ \text{supp}(\rho) \subseteq \text{supp}(\sigma) = \text{supp}(p \rho +  \rho')$.
\end{proof}

The equivalence relation $\sim$ groups any two density matrices $\rho$ and $\sigma$ with $\text{supp}(\rho) = \text{supp}(\sigma)$
into the same equivalence class, thus maps the set of density matrices on a Hilbert space $\mathcal{H}$ onto the set of
projections on $\mathcal{H}$. The projections are in one-to-one correspondence with the subspaces of $\mathcal{H}$ and they form an orthomodular lattice, providing a link to the logical structure of the Hilbert space \cite{van2004} aims to exploit
by using density matrices in IR.

Let $\denst{w}$ and $\denst{v}$ be density matrix representations of the words $v$ and $w$. Then $v$ is 
a hyponym of $w$ in this model if $\denst{v} \prec \denst{w}$ and $\denst{v} \nsim \denst{w}$.

Notice that even though this ordering on density matrices extracts a $yes/no$ answer for the question 
``is $v$ a hyponym of $w$?'', the existence of the quantitative measure lets us to also quantify the extent to which $v$ is a hyponym of $w$. This provides some flexibility in
characterizing hyponymy through density matrices in practice. Instead of calling $v$ a hyponym of $w$ even when 
$R(\denst{v}, \denst{w})$ gets arbitrarily small, one can require the representativeness to be above a certain 
threshold $\epsilon$. This modification, however, has the down side of causing the transivity of hyponymy to fail. 

\section{From meanings of words to the meanings of sentences passage}

As in the case for $ \cat{FVect \times P}$,  $\cat{CPM(FVect) \times P}$ is a compact closed category, 
 where the compact closed maps of \cat{CPM(FVect)} and \cat{P} lift 
component-wise to the product category.

\begin{definition}
A \textbf{meaning space} in this new category is a pair $(V^*  \tns V , p)$ where $V^* \tns V$ is the space 
in which density matrices $v: I \ra V^* \tns V$ of the pregroup type $p$ live. 
\end{definition}

\begin{definition}\label{densitySentence}
Let $v_1 v_2 \ldots v_n$ be a string of words, each $v_i$ with a meaning space representation 
$\denst{v_i} \in (V_i^* \tns V_i, p_i)$. 
Let $x \in P$ be a pregroup type such that $[ p_1 p_2 \ldots p_n \leq x ]$.
 Then the meaning density matrix for the string is defined as:
\[ \denst{ v_1 v_2 \ldots v_n } := f(\denst{v_1} \otimes \denst{v_2} \otimes \ldots \otimes \denst{v_n})  \in (W^* \tns W, x) \] 
where $f$ is defined to be the application of the compact closed maps obtained from the
 reduction $[ p_1 p_2 \ldots p_n \leq x ]$
to the composite density matrix space 
$(V_1 \tns V_1^*) \otimes (V_2^* \tns V_2) \otimes \ldots \otimes (V_n^* \tns V_n)$.
\end{definition}

From a high level perspective, the reduction diagrams for  $\cat{CPM(FVect) \times P}$ look no different than 
the original diagrams for $ \cat{FVect \times P}$, except that we depict them with thick instead of thin wires. 
Consider the previous example: ``John likes Mary''. It has the pregroup type
$n (n^r s n^l) n$, and the compact closed maps obtained from the pregroup reduction is 
 $(\epsilon^r \tns 1 \tns \epsilon^l)$.

One can also depict the diagram together with the internal anatomy of the density representations in \cat{FVect}:
\begin{center}
\begin{tikzpicture}[thick, scale=0.8]
	\begin{pgfonlayer}{nodelayer}
		\node [style=none] (0) at (-1, 0.75) {};
		\node [style=none] (1) at (0, 0.75) {};
		\node [style=none] (2) at (-1.25, 0.25) {};
		\node [style=none] (3) at (0, 0.25) {};
		\node [style=none] (4) at (0.25, 0.75) {};
		\node [style=none] (5) at (1.25, 0.75) {};
		\node [style=none] (6) at (0.25, 0.25) {};
		\node [style=none] (7) at (1.5, 0.25) {};
		\node [style=none] (8) at (-0.75, 0.75) {};
		\node [style=none] (9) at (0.5, 0.75) {};
		\node [style=none] (10) at (-0.5, 0.75) {};
		\node [style=none] (11) at (0.75, 0.75) {};
		\node [style=none] (12) at (-0.25, 0.75) {};
		\node [style=none] (13) at (1, 0.75) {};
		\node [style=none] (14) at (-1.25, -0.5) {};
		\node [style=none] (15) at (-0.75, 0.25) {};
		\node [style=none] (16) at (0.5, 0.25) {};
		\node [style=none] (17) at (-1, -0.5) {};
		\node [style=none] (18) at (1, 0.25) {};
		\node [style=none] (19) at (1.5, -0.5) {};
		\node [style=none] (20) at (1.25, -0.5) {};
		\node [style=none] (21) at (-0.25, 0.25) {};
		\node [style=none] (22) at (0, -0.5) {};
		\node [style=none] (23) at (-0.5, 0.25) {};
		\node [style=none] (24) at (0.75, 0.25) {};
		\node [style=none] (25) at (0.25, -0.5) {};
		\node [style=none] (26) at (-2.25, -0.5) {};
		\node [style=none] (27) at (0.1, 2.25) {};
		\node [style=none] (28) at (2.75, -0.5) {};
		\node [style=none] (29) at (-1.125, -0.5) {};
		\node [style=none] (30) at (0.125, -0.5) {};
		\node [style=none] (31) at (1.375, -0.5) {};
		\node [style=none] (32) at (0.125, -2) {};
		\node [style=none] (33) at (-0.5, 0.5) {likes};
		\node [style=none] (34) at (0.75, 0.5) {likes};
		\node [style=none] (35) at (-4, 0.25) {};
		\node [style=none] (36) at (-6, 0.25) {};
		\node [style=none] (37) at (-5.375, 2.25) {};
		\node [style=none] (38) at (-6, 0.5) {John};
		\node [style=none] (39) at (-5.5, 0.75) {};
		\node [style=none] (40) at (-4.75, 0.25) {};
		\node [style=none] (41) at (-6, 0.75) {};
		\node [style=none] (42) at (-6.5, 0.75) {};
		\node [style=none] (43) at (-4.25, 0.75) {};
		\node [style=none] (44) at (-4.75, 0.5) {John};
		\node [style=none] (45) at (-5.25, -0.5) {};
		\node [style=none] (46) at (-5.5, -0.5) {};
		\node [style=none] (47) at (-5.375, -0.5) {};
		\node [style=none] (48) at (-5.25, 0.75) {};
		\node [style=none] (49) at (-5.25, 0.25) {};
		\node [style=none] (50) at (-7.75, -0.5) {};
		\node [style=none] (51) at (-4.75, 0.75) {};
		\node [style=none] (52) at (-5.5, 0.25) {};
		\node [style=none] (53) at (-2.75, -0.5) {};
		\node [style=none] (54) at (-6.75, 0.25) {};
		\node [style=none] (55) at (5, 0.5) {Mary};
		\node [style=none] (56) at (5.5, 0.75) {};
		\node [style=none] (57) at (7, 0.25) {};
		\node [style=none] (58) at (5.75, -0.5) {};
		\node [style=none] (59) at (5.625, -0.5) {};
		\node [style=none] (60) at (6.75, 0.75) {};
		\node [style=none] (61) at (5, 0.25) {};
		\node [style=none] (62) at (5.5, 0.25) {};
		\node [style=none] (63) at (5.625, 2.25) {};
		\node [style=none] (64) at (4.25, 0.25) {};
		\node [style=none] (65) at (6.25, 0.75) {};
		\node [style=none] (66) at (5.75, 0.25) {};
		\node [style=none] (67) at (6.25, 0.25) {};
		\node [style=none] (68) at (5.5, -0.5) {};
		\node [style=none] (69) at (4.5, 0.75) {};
		\node [style=none] (70) at (5.75, 0.75) {};
		\node [style=none] (71) at (8.25, -0.5) {};
		\node [style=none] (72) at (6.25, 0.5) {Mary};
		\node [style=none] (73) at (5, 0.75) {};
		\node [style=none] (74) at (3.25, -0.5) {};
	\end{pgfonlayer}
	\begin{pgfonlayer}{edgelayer}
		\draw (0.center) to (1.center);
		\draw (1.center) to (3.center);
		\draw (3.center) to (2.center);
		\draw (2.center) to (0.center);
		\draw (4.center) to (5.center);
		\draw (5.center) to (7.center);
		\draw (7.center) to (6.center);
		\draw (6.center) to (4.center);
		\draw [arrow=0.5, bend left=90, looseness=1.50] (8.center) to (9.center);
		\draw [arrow=0.5, bend left=90, looseness=1.50] (10.center) to (11.center);
		\draw [arrow=0.5, bend left=90, looseness=1.50] (12.center) to (13.center);
		\draw [dashed] (26.center) to (27.center);
		\draw [dashed] (28.center) to (27.center);
		\draw [dashed] (26.center) to (28.center);
		\draw [arrow=0.3, in=-90, out=90, looseness=1.25] (14.center) to (15.center);
		\draw [arrow=0.8, in=90, out=-90, looseness=0.75] (16.center) to (17.center);
		\draw [arrow=0.3, in=-90, out=90] (22.center) to (23.center);
		\draw [arrow=0.7, in=90, out=-90] (24.center) to (25.center);
		\draw [arrow=0.2, in=-90, out=90, looseness=0.75] (20.center) to (21.center);
		\draw [arrow=0.7, in=90, out=-90, looseness=1.25] (18.center) to (19.center);
		\draw [line width=2, arrow=0.5] (30.center) to (32.center);
		\draw (42.center) to (39.center);
		\draw (39.center) to (52.center);
		\draw (52.center) to (54.center);
		\draw (54.center) to (42.center);
		\draw (48.center) to (43.center);
		\draw (43.center) to (35.center);
		\draw (35.center) to (49.center);
		\draw (49.center) to (48.center);
		\draw [arrow=0.5, bend left=90, looseness=1.50] (41.center) to (51.center);
		\draw [dashed] (50.center) to (37.center);
		\draw [dashed, in=-45, out=135] (53.center) to (37.center);
		\draw [dashed] (50.center) to (53.center);
		\draw [arrow=0.5, in=-90, out=90] (46.center) to (36.center);
		\draw [arrow=0.5, in=90, out=-90] (40.center) to (45.center);
		\draw (69.center) to (56.center);
		\draw (56.center) to (62.center);
		\draw (62.center) to (64.center);
		\draw (64.center) to (69.center);
		\draw (70.center) to (60.center);
		\draw (60.center) to (57.center);
		\draw (57.center) to (66.center);
		\draw (66.center) to (70.center);
		\draw [arrow=0.5, bend left=90, looseness=1.50] (73.center) to (65.center);
		\draw [dashed] (74.center) to (63.center);
		\draw [dashed, in=-45, out=135] (71.center) to (63.center);
		\draw [dashed] (74.center) to (71.center);
		\draw [arrow=0.5, in=-90, out=90] (68.center) to (61.center);
		\draw [arrow=0.5, in=90, out=-90] (67.center) to (58.center);
		\draw [arrow=0.5, line width=2, bend right=90] (47.center) to (29.center);
		\draw [arrow=0.5, line width=2, bend left=90] (59.center) to (31.center);
	\end{pgfonlayer}
\end{tikzpicture}
\end{center}

The graphical reductions for compact closed categories can be applied to the diagram, establishing $(\epsilon^r \tns 1 \tns \epsilon^l) (\denst{John} \tns \denst{likes} \tns \denst{Mary} )=
(\denst{John} \tns 1 \tns \denst{Mary} ) \circ \denst{likes}$.

As formalised in natural logic, one expects that if the subject and object of a sentence are common nouns which are, together with the verb of the sentence,  moreover, upward monotone, then if these are  replaced by their hyponyms, then the meanings of the original and 
the modified sentences would preserve this hyponymy. The following proposition shows that
 the sentence meaning map for simple transitive sentences achieves exactly that.

\begin{theorem} \label{main}
If $\rho, \sigma, \delta, \gamma \in (N^* \tns N, n)$, $\alpha,\beta \in  (N^* \tns N \tns S^* \tns S \tns N^* \tns N, n^l s n^r$  $\rho \prec \sigma$, $\delta \prec \gamma$ and $\alpha \prec \beta$ then 
\[ f(\rho \tns \alpha \tns \delta) \prec f(\sigma \tns \beta \tns \gamma) \]
\noindent
where $f$ is the from-meanings-of-words-to-the-meaning-of-the-sentence map in definition \ref{densitySentence}.
\end{theorem}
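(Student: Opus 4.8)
The plan is to discard the entropic definition of $\prec$ entirely and work instead with the third, purely algebraic characterization supplied by Proposition~\ref{repProp}: that $\rho \prec \sigma$ holds exactly when $\sigma = p\rho + \rho'$ for some scalar $p > 0$ and some positive operator $\rho'$. This reformulation is what makes the statement tractable, since it converts the order relation into an identity that interacts linearly with both the tensor product and the map $f$, neither of which has any evident effect on von Neumann entropy directly.

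First I would apply Proposition~\ref{repProp}(3) to each of the three hypotheses, writing
\[ \sigma = p_1 \rho + \rho', \qquad \beta = p_2 \alpha + \alpha', \qquad \gamma = p_3 \delta + \delta', \]
with $p_1, p_2, p_3 > 0$ and $\rho', \alpha', \delta'$ positive. Next I would expand $\sigma \otimes \beta \otimes \gamma$ by multilinearity. Selecting the leading summand from each of the three factors yields the single term $p_1 p_2 p_3\,(\rho \otimes \alpha \otimes \delta)$, while each of the remaining seven terms contains at least one of $\rho'$, $\alpha'$, $\delta'$ tensored with positive operators. Since a tensor product of positive operators is again positive, and since sums of positive operators are positive, the sum $\Theta$ of these seven terms is positive, giving the decomposition $\sigma \otimes \beta \otimes \gamma = p_1 p_2 p_3\,(\rho \otimes \alpha \otimes \delta) + \Theta$.

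It then remains to push $f$ through this decomposition. By linearity of $f$,
\[ f(\sigma \otimes \beta \otimes \gamma) = p_1 p_2 p_3\, f(\rho \otimes \alpha \otimes \delta) + f(\Theta), \]
and here I would invoke that $f$ is a completely positive, hence positive, map, so that $f(\Theta)$ is a positive operator. Setting $p = p_1 p_2 p_3 > 0$ and $\rho'' = f(\Theta)$ exhibits $f(\sigma \otimes \beta \otimes \gamma) = p\, f(\rho \otimes \alpha \otimes \delta) + \rho''$, which is precisely the form demanded by Proposition~\ref{repProp}(3); reading that proposition in the reverse direction then gives $f(\rho \otimes \alpha \otimes \delta) \prec f(\sigma \otimes \beta \otimes \gamma)$, as required.

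I expect the one genuine point requiring care to be the positivity of $f$. This is the linchpin of the whole argument, and it follows from the fact that $f$ is by definition a morphism of $\cat{CPM(FVect)}$: it is assembled by composing and tensoring the compact closed maps $\epsilon^r$, $\epsilon^l$ (together with identities on the $S^* \otimes S$ factor), each of which is completely positive by the very construction of $\cat{CPM(FVect)}$, and complete positivity is preserved under composition and tensor product. Everything else is routine multilinear bookkeeping; in particular the argument never touches the internal pregroup reduction beyond noting that $f$ acts as the identity on the sentence factor while contracting only the noun wires.
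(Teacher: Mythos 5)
Your proof is correct and takes essentially the same route as the paper's: both convert the three hypotheses into decompositions via Proposition~\ref{repProp}(3), expand $f(\sigma \otimes \beta \otimes \gamma)$ by multilinearity, isolate the scaled copy of $f(\rho \otimes \alpha \otimes \delta)$, and conclude from positivity of the remainder via Proposition~\ref{repProp}. If anything your version is tidier: the paper's own expansion silently drops the cross terms (such as $r\rho \otimes 1 \otimes \delta'$) and leaves the positivity of the residual terms implicit, whereas you keep all eight summands and justify that positivity explicitly through the complete positivity of $f$ as a morphism of $\cat{CPM(FVect)}$.
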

\begin{proof}
If $\rho \prec \sigma$, $\delta \prec \gamma$, and $\alpha \prec \beta$, then there exists a positive operator $\rho'$ and  $r >0$ such that
 $\sigma = r \rho +  \rho'$, a positive operator $\delta'$ and $d>0$ such that $\gamma = d \delta + \delta'$ and a positive operator $\alpha'$ and $a >0$ such that $\beta = a \alpha + \alpha'$ by Proposition \ref{repProp}. Then
\begin{align*}
f(\sigma \tns \beta \tns \gamma) &= (\epsilon^r \tns 1 \tns \epsilon^l) (\sigma \tns \beta \tns \gamma) \\
&= (\sigma \tns 1 \tns \gamma) \circ \beta \\
&= ((r \rho +  \rho') \tns 1 \tns (d \delta + \delta')) \circ (a \alpha + \alpha')  \\
&= ( r \rho \tns 1 \tns d \delta) \circ  (a \alpha + \alpha') +  (\rho'  \tns 1 \tns \delta') \circ (a \alpha + \alpha') \\
&= (r \rho\tns 1 \tns d \delta) \circ a \alpha + (r \rho\tns 1 \tns d \delta) \circ \alpha' +
(\rho'  \tns 1 \tns \delta') \circ (a \alpha + \alpha') , \\
f(\rho \tns \alpha \tns \delta) &=  (\rho \tns 1 \tns \delta) \circ \alpha
\end{align*}
since $r, d, a \neq 0$, $\text{supp}(  f(\rho \tns \alpha \tns \delta) ) 
\subseteq \text{supp} (f(\sigma \tns \beta \tns \gamma))$, which by Proposition \ref{repProp}
proves the theorem.
\end{proof}

\section{Truth Theoretic Examples}
We present several examples that demonstrate the application of the 
from-meanings-of-words-to-the-meaning-of-sentence map, where the initial 
meaning representations of words are density matrices, and explore 
how the hierarchy on nouns induced by their density matrix representations 
carry over to a hierarchy in the sentence space.

\subsection{Entailment between nouns}
Let ``lions'', ``sloths''. ``plants'' and ``meat'' have one dimensional representations in the noun space of our model:
 \[ \denst{lions} = \ketbra{\vec{lions}}{\vec{lions}} \hspace{20mm} \denst{sloths} = \ketbra{\vec{sloths}}{\vec{sloths}}\]
 \[\denst{meat} = \ketbra{\vec{meat}}{\vec{meat}}\hspace{20mm} \denst{plants}  = \ketbra{\vec{plants}}{\vec{plants}}\]

Let the representation of ``mammals'' be a mixture of one dimensional representations of individual animals:
\[ \denst{mammals} = 1/2 \ketbra{\vec{lions}}{\vec{lions}} + 1/2 \ketbra{\vec{sloths}}{\vec{sloths}}\]
Notice that 
\begin{align*}
 N(\denst{lions}|| \denst{mammals}) &=  \tr ( \denst{lions}   \log \denst{lions}) -  tr ( \denst{lions} \log \denst{mammals}) \\
&= \log1 - \frac{1}{2} \log \frac{1}{2} = 1 
\end{align*}
Hence $R(\denst{lions}, \denst{mammals}) = 1/2$. For the other direction, 
since the intersection of the support of $\denst{mammals}$ and the kernel of $\denst{lions}$ is non-empty,  $R(\denst{mammals}, \denst{lions}) = 0$.
This confirms that $\denst{lions} \prec \denst{mammals}$. 

\subsection{Entailment between sentences in one dimensional truth theoretic space} 
Consider a sentence space that is one dimensional, where 1 stands for true and 0 for false. 
Let  sloths eat plants and lions eat meat; this is represented as follows
\begin{align*}
\denst{eat} =  &(\ket{\vec{sloths}} \ket{\vec{plants}} + \ket{\vec{lions}}\ket{\vec{meat}})(\bra{\vec{sloths}}\bra{\vec{plants}} + \bra{\vec{lions}}\bra{\vec{meat}} ) \\
\approx &(\prj{sloths} \tns \prj{plants}) +  (\prjdif{sloths}{lions} \tns \prjdif{plants}{meat} ) +\\
& (\prjdif{lions}{sloths} \tns \prjdif{meat}{plants} )+ (\prj{lions} \tns \prj{meat} )
\end{align*}
The above is the density matrix representation of a pure composite state that relate ``sloths'' to ``plants'' and 
``lions'' to ``meat''. If we fix the bases $\{\vec{lions}, \vec{sloths}\}$ for $N_1$, and $\{\vec{meat}, \vec{plants}\}$
for $N_2$, we will have $\denst{eat} : N_1 \tns N_1 \ra N_2 \tns N_2$ with the following matrix representation:

\[ \left( \begin{array}{cccc}
1 & 0 & 0 & 1\\
0 & 0 & 0 & 0\\
0 & 0 & 0 & 0\\
1 & 0 & 0 & 1 \end{array} \right)\] 

\paragraph{``Lions eat meat''}. This is a transitive sentence, so as before, it has the pregroup type:
$n n^l s n^r n$. 
%
%
%
Explicit calculations for its meaning give:
\begin{align*}
(\epsilon^l_N \tns 1_S \tns \epsilon^r_N) (&\denst{lions} \tns \denst{eat} \tns \denst{meat}) \\
&= \braket{\vec{lions}}{\vec{sloths}}^2 \braket{\vec{plants}}{\vec{meat}}^2 + \\
&\hspace{10mm}\braket{\vec{lions}}{\vec{sloths}} \braket{\vec{lions}}{\vec{lions}} \braket{\vec{meat}}{\vec{meat}} 
\braket{\vec{plants}}{\vec{meat}} +\\
&\hspace{10mm} \braket{\vec{lions}}{\vec{lions}} \braket{\vec{lions}}{\vec{sloths}} \braket{\vec{meat}}{\vec{meat}}
\braket{\vec{plants}}{\vec{meat}} + \\
&\hspace{10mm} \braket{\vec{lions}}{\vec{lions}}^2 \braket{\vec{meat}}{\vec{meat}}^2 \\
&= 1
\end{align*}
\paragraph{``Sloths eat meat''}. This sentence has a very similar calculation to the one above with the resulting meaning:

\[(\epsilon^l_N \tns 1_S \tns \epsilon^r_N) (\denst{sloths} \tns \denst{eat} \tns \denst{meat}) = 0 \]

\paragraph{``Mammals eat meat''}. This sentence has the following meaning calculation:
\begin{align*}
&(\epsilon^l_N \tns 1_S \tns \epsilon^r_N) (\denst{mammals} \tns \denst{eat} \tns \denst{meat}) \\
&= (\epsilon^l_N \tns 1_S \tns \epsilon^r_N) ((\frac{1}{2}\denst{lions}+\frac{1}{2}\denst{sloths}) \tns \denst{eat} \tns \denst{meat}) \\
&=  \frac{1}{2}(\epsilon^l_N \tns 1_S \tns \epsilon^r_N)(\denst{lions} \tns \denst{eat} \tns \denst{meat}) +  \frac{1}{2}(\epsilon^l_N \tns 1_S \tns \epsilon^r_N)(\denst{sloths} \tns \denst{eat} \tns \denst{meat}) \\
&= \frac{1}{2}
\end{align*}
 
The resulting meaning of  this sentence is a mixture of ``lions eat meat'', which is true, and
``sloths eat meat'' which is false. Thus the value $1/2$ can be interpreted as being neither completely
true or completely false: the sentence ``mammals eat meat''
is true for certain mammals and false for others. 
 
\subsection{Entailment between sentences in two dimensional truth theoretic space} 

The two dimensional truth theoretic space is set as follows:
\[ true \equiv \ket{0} \equiv \left( \begin{array}{c}
1 \\
0\\
 \end{array} \right)
\qquad\qquad 
 false \equiv \ket{1} \equiv
 \left( \begin{array}{c}
0 \\
1\\
 \end{array} \right) \] 
 
\noindent
The corresponding   \emph{true} and \emph{false} density matrices are $\ketbra{0}{0}$ and $\ketbra{1}{1}$. 
 
 In the two dimensional  space, the representation of ``eats'' is set as follows. Let $A=\{lions, sloths\}$ and $B=\{meat,plants\}$, then
 
 \[ \denst{eat} \equiv \sum_{\substack{a_1,a_2 \in A \\ b_1,b_2 \in B}} \prjdif{a_1}{a_2} \tns \prj{x} \tns \prjdif{b_1}{b_2}  \]
where \[ \ket{x} \equiv
 \begin{cases}
 \ket{0} \text{ if } \ket{a_1}\ket{b_1}, \ket{a_2}\ket{b_2} \in \{ \ket{\vec{lions}}\ket{\vec{meat}}, \ket{\vec{sloths}}\ket{\vec{plants}} \} \\
 \ket{1} \text{ otherwise }
 \end{cases}
 \]
 The generalized matrix representation of this verb in the spirit of \cite{grefenstette2013} is:
 
\[   \left( \begin{array}{c c c c | c c c c }
1&0&0&1&0&1&1&0 \\
0&0&0&0&1&1&1&1 \\
0&0&0&0&1&1&1&1 \\
1&0&0&1&0&1&1&0
 \end{array} \right) \]
  
\paragraph{``Lions eat meat''.}The calculation for  the meaning of this sentence is almost exactly the same as 
the case of the one dimensional meaning, only the result is not the scalar that stands for $true$ but its density
matrix:

\[(\epsilon^l_N \tns 1_S \tns \epsilon^r_N) (\denst{lions} \tns \denst{eat} \tns \denst{meat}) = \ketbra{0}{0} \]

\paragraph{``Sloths eat meat''.}  Likewise, the calculation for the meaning of this sentence  returns $false$: 
\[(\epsilon^l_N \tns 1_S \tns \epsilon^r_N) (\denst{sloths} \tns \denst{eat} \tns \denst{meat}) = \ketbra{1}{1} \]

\paragraph{``Mammals eat meat".}  As we saw before, this sentence  has the meaning
that is the mixture of ``Lions eat meat'' and ``Sloths eat meat''; here,  this is expressed  as follows:
\begin{align*}
&(\epsilon^l_N \tns 1_S \tns \epsilon^r_N) (\denst{mammals} \tns \denst{eat} \tns \denst{meat}) \\
 &=  \frac{1}{2}(\epsilon^l_N \tns 1_S \tns \epsilon^r_N)(\denst{lions} \tns \denst{eat} \tns \denst{meat}) +  \frac{1}{2}(\epsilon^l_N \tns 1_S \tns \epsilon^r_N)(\denst{sloths} \tns \denst{eat} \tns \denst{meat}) \\
&= \frac{1}{2} \ketbra{1}{1} + \frac{1}{2}\ketbra{0}{0} 
\end{align*}

\noindent
So in a two dimensional truth theoretic model, ``Mammals eat meat'' give the completely mixed state 
in the sentence space, which has maximal entropy. This is equivalent to saying that we have no real knowledge 
whether mammals in general eat meat or not. Even if we are completely certain about whether individual mammals that 
span our space for ``mammals'' eat meat, this information differs uniformly within the members of the class,
so we cannot generalize. 

Already with a two dimensional truth theoretic model, the relation $\denst{lions} \prec \denst{mammals}$ 
carries over to  sentences. To see this, first note that we have

\begin{align*}
 &N( \denst{ \text{\it lions eat meat}} || \denst{\text{\it mammals eat meat}} )   = N \left( \ketbra{0}{0} \,\, \middle| \middle| \,\, \frac{1}{2}\ketbra{0}{0} + \frac{1}{2}\ketbra{1}{1} \right) \\
 &= (\ketbra{0}{0}) \log (\ketbra{0}{0}) - (\ketbra{0}{0}) \log \left( \frac{1}{2}\ketbra{0}{0} + \frac{1}{2}\ketbra{1}{1} \right) \\
 &= 1
\end{align*}
In the other direction, we have $N( \denst{ \text{\it mammals eat meat}} || \denst{\text{\it lions eat meat}} ) = \infty$,  
since the intersection of
the support of 
the first argument and the kernel of the second argument is non-trivial. 
These lead to the following  representativeness results  between   sentences:
\begin{align*}
R(\denst{ \text{\it lions eat meat}}, \denst{ \text{\it mammals eat meat}} ) &= 1/2 \\
 R(\denst{ \text{\it mammals eat meat}} || \denst{\text{\it lions eat meat}} ) &= 0 
 \end{align*}
As a result we obtain:
\[  \denst{ \text{\it lions eat meat}} \prec \denst{\text{\it mammals eat meat}} \]

Since these two sentences share the same verb phrase,  from-meaning-of-words-to-the-meaning-of-sentence map carries the hyponymy relation in the subject 
words of the respective sentences to the resulting sentence meanings. By using the density matrix representations
of word meanings together with the categorical map from the meanings of words to the meanings of sentences, 
 the knowledge that a lion is an animal lets us infer that ``mammals eat meat'' implies ``lions eat meat'':
 \[ (\denst{\text{\it lions}} \prec \denst{\text{\it mammals}}) \ra (\denst{ \text{\it lions eat meat}} \prec \denst{\text{\it mammals eat meat}}) \]

\paragraph{``Dogs eat meat''.} To see how the completely mixed state differs from a perfectly correlated but 
pure state in the context of linguistic meaning, consider a new noun $\denst{dog} = \prj{dog}$ and redefine 
eat in terms of the bases $\{ \vec{lions}, \vec{dogs} \}$ and $\{\vec{meat}, \vec{plants}\}$, so that it will reflect the fact that dogs 
eat \textit{both} meat and plants. 
We define ``eat'' so that it results in the value of being ``half-true half-false'' when it takes ``dogs'' as subject and 
``meat'' or ``plants'' as object. The value ``half-true half-false'' is the superposition of $true$ and $false$: 
$\frac{1}{2}\ket{0} + \frac{1}{2}\ket{1}$. With this assumptions,  $\denst{eat}$ will still be a pure state with the following representation  in \cat{FVect}: 
\begin{align*}
\ket{\vec{eat}} = &\ket{\vec{lions}} \tns \ket{0} \tns \ket{\vec{meat}} + 
 \ket{\vec{lions}} \tns \ket{1} \tns \ket{\vec{plants}} +\\
 &\ket{\vec{dogs}} \tns (\frac{1}{2}\ket{0} + \frac{1}{2}\ket{1}) \tns \ket{\vec{meat}} + 
 \ket{\vec{dogs}} \tns  (\frac{1}{2}\ket{0} + \frac{1}{2}\ket{1}) \tns \ket{\vec{plants}} 
\end{align*}

\noindent
Hence, the density matrix representation of ``eat''  becomes:
\[ \denst{eat} = \prj{eat}\]

\noindent
The calculation for the meaning of the sentence is as follows:
\begin{align*}
&(\epsilon^l_N \tns 1_S \tns \epsilon^r_N) (\denst{dogs} \tns \denst{eat} \tns \denst{meat})\\
&= (\epsilon^l_N \tns 1_S \tns \epsilon^r_N) (\prj{dogs} \tns \prj{eat} \tns \prj {meat}) \\
&= (\frac{1}{2}\ket{0} + \frac{1}{2}\ket{1})(\frac{1}{2}\bra{0} + \frac{1}{2}\bra{1})
\end{align*}

\noindent
So in this case, we are certain that it  is half-true and half-false that dogs eat meat. This is in contrast with 
the completely mixed state we got from ``Mammals eat meat'', for which the truth or falsity of the sentence
was entirely unknown.

\paragraph{``Mammals eat meat'', again.}  Let ``mammals'' now be defined as:
\[ \denst{mammals} = \frac{1}{2}\denst{lions} + \frac{1}{2} \denst{dogs} \] 
The calculation for the meaning of this sentence gives:
\begin{align*}
&(\epsilon^l_N \tns 1_S \tns \epsilon^r_N) (\denst{mammals} \tns \denst{eat} \tns \denst{meat}) \\
 &=  \frac{1}{2}(\epsilon^l_N \tns 1_S \tns \epsilon^r_N)(\denst{lions} \tns \denst{eat} \tns \denst{meat}) +  \frac{1}{2}(\epsilon^l_N \tns 1_S \tns \epsilon^r_N)(\denst{dogs} \tns \denst{eat} \tns \denst{meat}) \\
&= \frac{3}{4} \ketbra{0}{0} + \frac{1}{4}\ketbra{0}{1} + \frac{1}{4}\ketbra{1}{0} + \frac{1}{4}\ketbra{1}{1} \\
\end{align*}

This time the resulting sentence representation is not completely mixed. This means that we can 
generalize the knowledge we 
have from the specific instances of mammals to the entire class to some extent,  but 
still we cannot generalize completely. This is a mixed state, which indicates that 
even if the sentence is closer to $true$ than to $false$, the degree of truth isn't homogeneous throughout the 
elements of the class. The non-zero non-diagonals indicate that it is also partially correlated, 
which means that there are some instances of ``mammals'' for which this sentence is true to a degree, 
but not completely. The relative similarity measures between  $true$ and $false$ and the sentence can be calculated
explicitly using fidelity:
\[
F \big(  \ketbra{1}{1} , \denst{\text{\it mammals eat meat} } \big)
= \bra{1} \denst{\text{\it mammals eat meat} } \ket{1} 
= \frac{1}{4} \]
\[
F \big(  \ketbra{0}{0}  , \denst{\text{\it mammals eat meat} } \big) 
= \bra{0} \denst{\text{\it mammals eat meat} } \ket{0} 
= \frac{3}{4}
\]

Notice that these values are different from the values for the representativeness for truth and falsity of 
the sentence, even thought they are proportional: the more representative their density matrices, 
the more similar the sentences are to each other. For example, we have:
\begin{align*}
N \big( & \ketbra{1}{1} \,\, \| \,\, \denst{\text{\it mammals eat meat} } \big) \\ 
&= \tr \big( \ketbra{1}{1}) \log (\ketbra{1}{1}) \big) - \tr \big( \ketbra{1}{1} \log ( \frac{3}{4} \ketbra{0}{0} + \frac{1}{4}\ketbra{0}{1} + \frac{1}{4}\ketbra{1}{0} + \frac{1}{4}\ketbra{1}{1} ) \big)\\
&\approx 2 
\end{align*}
Hence, $R \big( \ketbra{1}{1} \,\, \| \,\, \denst{\text{\it mammals eat meat} } \big) \approx .33$. On the other hand: 
\begin{align*}
N \big( & \ketbra{0}{0} \,\, \| \,\, \denst{\text{\it mammals eat meat} } \big) \\ 
&= \tr \big( \ketbra{0}{0}) \log (\ketbra{0}{0}) \big) - \tr \big( \ketbra{0}{0} \log ( \frac{3}{4} \ketbra{0}{0} + \frac{1}{4}\ketbra{0}{1} + \frac{1}{4}\ketbra{1}{0} + \frac{1}{4}\ketbra{1}{1} ) \big)\\
&\approx 0.41
\end{align*}
Hence, $R\big(  \ketbra{0}{0} \,\, \| \,\, \denst{\text{\it mammals eat meat} } \big) \approx 0.71$

\section{A Distributional Example}

The goal of this section is to show how one can obtain  density matrices for words using lexical taxonomies and co-occurrence  frequencies counted from corpora of text. We  show how  these density matrices are used in  example sentences and how the density matrices of their meanings look like. We compute the  representativeness formula for these sentences to provide a proof of concept that this measure does   makes sense   for data harvested from corpora distributionally and that  its application is not restricted to truth-theoretic models.    Implementing these constructions on real data and validating them on large scale datasets constitute work in progress.

\subsection{Entailment between nouns}  
Suppose we have a noun space $N$. Let  the subspace relevant for this part of the example be spanned by lemmas {\it pub, pitcher,  tonic}. Assume that the  (non-normalized version of the) vectors of the atomic words {\it lager} and {\it ale} in this subspace are as follows:
\[
\ov{{lager}} = 6 \times \ov{{pub}} +  5  \times\ov{{pitcher}} + 0 \times \ov{{tonic}} \qquad
\ov{{ale}} =  7  \times \ov{{pub}} +  3 \times \ov{{pitcher}} + 0 \times \ov{{tonic}}
\]
Suppose further that we are given taxonomies such as `beer = lager + ale', harvested from a resource such as WordNet. Atomic words (i.e. leafs of the taxonomy),  correspond to \emph{pure} states and their density matrices are the projections onto the one dimensional subspace spanned by $\ket{\overrightarrow{w}}\bra{\overrightarrow{w}}$.  Non-atomic words (such as {\it beer}) are also density matrices,  harvested from the corpus using a feature-based method similar to that of  \cite{geffet2005}. This  is done by  counting (and normalising)  the frequency of times a word has co-occurred with a subset  $B$  of  bases in a window in which other bases (the ones not in $B$) have not occurred.   

Formally, for a subset of bases $\{b1, b2,..., bn\}$, we collect  co-ordinates $C_{ij}$  for each tuple $\ket{bi} \ket{bj}$ and build the density matrix $\sum_{ij} C_{ij} \ket{bi} \ket{bj}$.  


For example, suppose  we see {\it beer}  six times with just {\it pub}, seven times with both {\it pub} and {\it pitcher}, and none-whatsoever with {\it tonic}. Its corresponding  density matrix will be as follows: 
\[  \denst{{beer}} = 6 \times \prj{pub} + 7 \times ( \ket{\vec{pub}} + \ket{\vec{pitcher}})(\bra{\vec{pub}} + \bra{\vec{pitcher}}) \]
\[= 13 \times \prj{pub} +  7  \times \prjdif{pub}{pitcher} + 7 \times \prjdif{pitcher}{pub} +
7 \times \prj{pitcher}
 \]
 
 \smallskip
 To calculate  the similarity and representativeness of the word pairs, we first normalize them via the operation  $\frac{\rho}{\text{Tr}\rho}$, then apply the corresponding formulae.  For example, the degree of similarity between `beer' and `lager'  using fidelity is  as follows:
\[
\text{Tr} \sqrt{ \denst{{lager}}^{\frac{1}{2}} \cdot \denst{{beer}} \cdot\denst{{lager}}^{\frac{1}{2}}} = 0.93
 \]
 The degree  of entailment  $lager \prec  beer $ is 0.82 as   computed as follows:
\[\frac{1}{1+ \text{Tr}(\denst{{lager}} \cdot \log (\denst{{lager}} )- \denst{{lager}} \cdot \log (\denst{{beer}}))} = 0.82 \] 
The degree of entailment   $beer \prec  lager$ is $0$,  like one would expect.
 
\subsection{Entailment between sentences}
To see how the entailment between sentences follows from the entailment between words, consider  example  sentences \textit{`Psychiatrist is drinking lager'} and \textit{`Doctor is drinking beer'}. For the sake of brevity, we assume the meanings of {\it psychiatrist} and {\it doctor}  are mixtures of basis elements, as follows:
\begin{align*}
\denst{{psychiatrist}} &= 2 \times \prj{patient} + 5\times \prj{mental} \\
\denst{{doctor}} &= 5\times  \prj{patient} + 2\times \prj{mental} + 3 \times \prj{surgery}
\end{align*}
The similarity between {\it psychiatrist} and {\it doctor} is: 
\[
 S(\denst{{psychiatrist}}, \denst{{doctor}}) =  S(\denst{{doctor}}, \denst{{psychiatrist}}) = 0.76 
 \]
The representativeness between them is:
\[
  R(\denst{{psychiatrist}}, \denst{{doctor}}) = 0.49  \qquad \qquad
 R(\denst{{doctor},} \denst{{psychiatrist}}) = 0
\]
We  build  matrices for  the verb {\it drink}  following the method of \cite{grefenstette2011}. Intuitively this is as follows:  the value in entry $(i,j)$ of this matrix will reflect how typical it is for the verb to have a subject related to the $i$th basis and an object related to the $j$th basis. We assume that the  small part of the matrix that interests us for this example is as follows:

\begin{center}
\begin{tabular}{ |c||c|c|c| } 
 \hline
\emph{drink}& {\it pub} & {\it pitcher} & {\it tonic} \\  \hline \hline
{\it patient} &4  & 5 & 3 \\ \hline
 {\it mental} & 6 & 3 & 2 \\ \hline
 {\it surgery} & 1  & 2 & 1 \\  \hline
\end{tabular}
\end{center}

 This representation can be seen as a pure state living in a second order tensor. Therefore the density matrix representation of the same object is $\denst{\text{\it drink}} = \prj{drink}$, a fourth order tensor. Lifting the simplifications introduced in \cite{grefenstette2011} from vectors to density matrices, we obtain the following linear algebraic closed forms for the meaning of the sentences:
 \begin{align*}
 \denst{\text{\it Psychiatrist is drinking lager}} &= \denst{\text{\it drink}} \odot (\denst{\text{\it psychiatrist}} \otimes \denst{\text{\it lager}}) \\
  \denst{\text{\it Doctor is drinking beer}} &= \denst{\text{\it drink}} \odot (\denst{\text{\it doctor}} \otimes \denst{\text{\it beer}})
 \end{align*}
 
 \noindent
Applying the  fidelity and representativeness formulae to  sentence representations, we obtain the following values:
\begin{align*}
S(\denst{\text{\it Psychiatrist is drinking lager}},\denst{\text{\it Doctor is drinking beer}}) &= 0.81\\
R(\denst{\text{\it Psychiatrist is drinking lager}},\denst{\text{\it Doctor is drinking beer}}) &= 0.53 \\
R(\denst{\text{\it Doctor is drinking beer}},\denst{\text{\it Psychiatrist is drinking lager}}) &= 0
\end{align*}
From the relations $\text{\it psychiatrist} \prec \text{\it doctor}$ and $\text{\it lager} \prec \text{\it beer}$ we obtain the desired entailment between sentences:
\[
\text{\it Psychiatrist is drinking lager} \prec \text{\it Doctor is drinking beer}\,.
\]
The entailment between these two sentences follows from the entailment  between their subjects and the entailment between their objects. In the examples that we have considered so far, the verbs of sentences are  the same. This is not a necessity. One can  have  entailment between sentences that do not have the same verbs, but where the verbs entail each other, examples can be found in \cite{balkir2014}. The reason we do not  present such cases  here is lack of space.

\section{Conclusion and Future Work}
The often stated long term goal of compositional distributional models is to merge distributional and formal semantics. However, what formal and distributional semantics \textit{do} with the resulting meaning representations is quite different. Distributional semanticists care about \textit{similarity} while formal semanticists aim to capture \textit{truth} and \textit{inference}. In this work we presented a theory of meaning using basic objects that will not confine us to the realm of only distributional or only formal semantics. The immediate next step is to develop methods for obtaining density matrix representations of words from corpus, that are more robust to statistical noise, and testing the usefulness of the theory in large scale experiments. 

The problem of integrating function words such as `and', `or', `not', `every' into a distributional setting has been notoriously hard. We hope that the characterization of compositional distributional entailment on these very simple types of sentences will provide a foundation on which we can define representations of these function words, and develop a more logical theory of compositional distributional meaning.

\bibliographystyle{plain}
\bibliography{qpl15.bib}

\end{document}